\newtheorem{defn}{Definition}[section]
\newcolumntype{L}[1]{>{\raggedright\let\newline\\\arraybackslash\hspace{0pt}}m{#1}}
\newcolumntype{C}[1]{>{\centering\let\newline\\\arraybackslash\hspace{0pt}}m{#1}}
\newcolumntype{R}[1]{>{\raggedleft\let\newline\\\arraybackslash\hspace{0pt}}m{#1}}
\newtheorem{theorem}{Theorem}[section]
\newcommand{\A}{\mathcal{A}}
\newcommand{\K}{\mathcal{K}}
\newcommand{\F}{\mathcal{F}}
\newcommand{\B}{\mathcal{B}}
\renewcommand{\L}{\mathcal{L}}
\renewcommand{\S}{\mathcal{S}}
\newcommand{\Y}{\mathcal{Y}}
\newcommand{\W}{\mathcal{W}}
\newcommand{\D}{\mathcal{D}}
\newcommand{\name}{$\mathsf{EC3}$}
\newcommand{\namei}{$\mathsf{iEC3}$}
\begin{document}
%
\title{\name: Combining Clustering and Classification for Ensemble Learning}
%
%
%
%

\author{Tanmoy Chakraborty

\IEEEcompsocitemizethanks{\IEEEcompsocthanksitem T. Chakraborty is with the Department
of Computer Science \& Engineering, Indraprastha Institute of Information Technology Delhi (IIIT Delhi), India. 
\protect\\
E-mail: tanmoy@iiitd.ac.in
}
\thanks{}}

%
%

\markboth{Journal of \LaTeX\ Class Files,~Vol.~13, No.~9, September~2014}%
{Shell \MakeLowercase{\textit{et al.}}: Bare Demo of IEEEtran.cls for Computer Society Journals}
%



\IEEEtitleabstractindextext{%
\begin{abstract}
Classification and clustering algorithms have been proved to be successful individually in different contexts. Both of them have their own advantages and  limitations. For instance, although classification algorithms are more powerful than clustering methods in predicting class labels of objects, they do not perform well when there is a lack of sufficient manually labeled reliable data. On the other hand, although clustering algorithms do not produce label information for objects, they provide supplementary constraints (e.g., if two objects are clustered together, it is more likely that the same label is assigned to both of them)  that one can leverage for label prediction of a set of unknown objects. Therefore, systematic utilization of both these types of algorithms together can lead to better prediction performance. In this paper, We propose a novel algorithm, called \name~ that merges classification and clustering together in order to support both binary and multi-class classification. \name~ is based on a principled combination of multiple classification and multiple clustering methods using an  optimization function. We theoretically show the convexity and  optimality of the problem and solve it by block coordinate descent method. We additionally propose \namei, a variant of \name~ that handles {\em imbalanced dataset}. We perform an extensive experimental analysis by comparing \name~ and \namei~ with $14$ baseline methods ($7$ well-known standalone classifiers, $5$ homogeneous ensemble classifiers, and $2$ heterogeneous ensemble classifiers that merge classification and clustering) on $13$ standard benchmark datasets. We show that our methods outperform other baselines for every single dataset, achieving at most $10\%$ higher AUC. Moreover our methods are {\em faster} (1.21 times faster than the best heterogeneous baseline), more {\em resilient to noise} and {\em class imbalance} than the best  baseline method.
\end{abstract}

\begin{IEEEkeywords}
Ensemble algorithm, clustering, classification, imbalanced data.
\end{IEEEkeywords}}

\maketitle

\IEEEdisplaynontitleabstractindextext

%
\IEEEpeerreviewmaketitle

\IEEEraisesectionheading{\section{Introduction}\label{sec:introduction}}

\IEEEPARstart{S}{uppose} in a classification task, direct access to the raw data (i.e., objects and their features) is not allowed due to privacy and storage issues. Instead, only the opinions of other experts on the actual class of the objects are available. The experts might be human annotators or different predictive models. The task is to better predict the class of each object. 
This scenario is realistic -- e.g., in financial sector, customers might have accounts across different banks. To analyze customer segmentation or to detect fraud accounts, it may be unsafe to provide customer information across different banks to the third-party experts. Instead, one can conduct a preliminary analysis/prediction at each bank individually and then combine the predictions from different banks to obtain an aggregated result.

Ensembles have already  proven  successful in both unsupervised \cite{Dimitriadou2001,Strehl:2003,FernB04} and supervised  \cite{bagging,Schapire:1999,0002CS17,Gomes:2017} learning. In supervised classification, objects are generally classified one at a time with the assumption that they are drawn from an {\em independent and identical distribution} (i.i.d.); thus the inter-dependencies between objects are not considered \cite{ZhangYZHL14}.
Moreover, with less amount of labeled data it may be hard to reliability predict the label of an unknown object. On the other hand, unsupervised clustering methods complement it by considering object-relationships, thus providing supplementary constraints in classifying objects.  For example, a pair of objects which are close in a feature space are more likely to obtain same class label than those pairs which are far apart from each other. 
These supplementary constraints  can be useful in improving the
generalization capability of the resulting classifier, especially when labeled data is rare \cite{Acharya2011}. Moreover, they can be useful for designing learning methods where there is a significant difference in training and testing data distributions \cite{Acharya2011}.

Recent efforts have shown that combining classification and clustering methods can yield better classification result \cite{NIPS2009_3855,Ao:2014}.  However, the crucial question is how to combine both classification and clustering? Acharya et al. \cite{Acharya2011} suggested a post-processing technique by leveraging clustering results that refines the aggregated output obtained from the classifiers. Later, Gao et al. \cite{NIPS2009_3855} suggested an object-group bipartite graph based model to combine two types of results. Ao et al. \cite{Ao:2014} proposed a complex unconstrained probabilistic embedding method to solve this problem. 

{\bf Our Proposed Ensemble Classifier:}
In this paper, we propose \name, an {\bf E}nsemble {\bf C}lassifier that  combines both {\bf C}lassification and {\bf C}lustering to see if combining supervised and unsupervised models achieves better prediction results. 
\name~ is built on two fundamental hypotheses -- (i) if two objects are clustered together by multiple clustering methods, they are highly likely to be in the same class, (ii) the final prediction should not deviate much from the majority voting of the classifiers. We ensure that each group (or, class) consists of homogeneous objects (i.e., objects within a group are likely to have same set of features). This in turn ensures that the group characteristics are same as the characteristics of constituent objects inside the group. We map this task into an optimization problem, and prove that the proposed objective function is convex. We use block coordinate descent for optimization. 
One of the motivations behind designing an ensemble clustering and classification approach is that it should be able to handle imbalanced data, i.e., a dataset where majority one or few class labels dominate others and the class distribution is skewed. We observe that \name~tends to be effective for balanced datasets. Therefore, we further design \namei, a variant of \name~particularly to tackle imbalanced datasets. \namei~turns out to  perform as well or even better than \name.

Gao et al. \cite{NIPS2009_3855} classified the entire spectrum of learning methods based on two dimensions: one dimension is the level of supervision (unsupervised, semi-supervised and supervised), and other dimension is the way the ensemble executes (no ensemble, ensemble at the raw data level and ensemble at the output level). In this sense, our proposed framework is a semi-supervised approach which combines multiple data at the output level.

\begin{table}[!t]
\centering
\caption{Summary of the comparative evaluation among UPE \cite{Ao:2014} (best baseline) and \namei~ on the largest dataset (Susy).}\label{summary}
\vspace{-3mm}
\scalebox{0.8}{
\begin{tabular}{|c|c|c|c|c|c|}
\hline
 \multirow{2}{*}{Method}&\multicolumn{2}{c|}{Accuracy} & Class Imbalance & Robustness & Runtime \\ \cline{2-3}
  & AUC & F-Sc & ($30\%$ imbalance) & (10 random solutions) & (in seconds)\\\hline
 UPE & 0.74 & 0.72 & 71\% of original & 58\% of original & 28721\\\hline
 \namei~& {\bf 0.78} & {\bf 0.76} & {\bf 87\%} of original & {\bf 88\%} of original & {\bf 27621}\\\hline
\end{tabular}}
\end{table}

{\bf Summary of our Contributions:} We test our method with 14 baselines, including $7$ standalone   classifiers, $5$ homogeneous ensemble classifiers (that combine multiple classifiers) \cite{Reid2009,bagging,Schapire:1999,Ao:2014}, and $2$ sophisticated heterogeneous ensemble classifiers (that combine both classifiers and clustering methods) \cite{NIPS2009_3855,Ao:2014} on 13  datasets (Section \ref{setup}).  Our method turns out to be superior to other baselines w.r.t following four aspects (a summary of the comparative evaluation for the largest dataset is presented in Table \ref{summary}): 

\noindent(i) {\bf Consistency:} \name~and \namei~outperform other baselines for every single dataset (\namei~is superior to \name). On average, \namei~achieves at least $3\%$ and at most $10\%$ higher accuracy (in terms of Area under the ROC curve) than UPE (the best baseline method) \cite{Ao:2014}. 

\noindent(ii) {\bf Handling class imbalance:} \namei ~efficiently handles datasets where the class size is not uniformly distributed -- with $30\%$ random class imbalance in the largest dataset (Susy), \namei~(UPE) retains 89\% (71\%) of its  accuracy that it achieves with completely balanced data. 

\noindent(iii) {\bf Robustness:} \namei~is remarkably resilient to random noise -- \namei~(UPE) retains at least 88\% (58\%) of its original performance (noise-less scenario) with 10 random solutions (noise) injected to the base set for Susy. 

\noindent(iv) {\bf Scalability:} \namei~is faster than any heterogeneous ensemble model --  on average \namei~is $1.21$ times faster than UPE. 

We also show the effect of the number and the quality of base methods on the performance of our methods. We further show that the runtime of our method is linear in the number of base methods and the number of objects, and quadratic in the number of classes. More importantly, it is faster than two heterogeneous ensemble classifiers \cite{NIPS2009_3855,Ao:2014}. 
In short, \namei~ is a fast, accurate and robust ensemble classifier.

{\bf Organization of the paper:} The paper starts with a comprehensive literature survey in Section \ref{related work}. Section \ref{method} presents a detailed description of our proposed method. Section \ref{setup} shows the experimental setup (the datasets, base and baseline algorithms used in this paper). A detailed experimental results including comparative evaluation (Section \ref{compare}), handling class imbalance (Section \ref{class_imbalance}), robustness analysis (Section \ref{robustness}) and scalability (Section \ref{runtime}) is shown in Section \ref{result}. We conclude  by summarizing the  paper and mentioning possible future directions in Section \ref{conclusion}.

\section{Related Work}\label{related work}
 Major research has been devoted to develop/improve supervised learning algorithms. Decision tree, Support vector Machine, logistic regression, neural network are a few of many such supervised methods \cite{Kotsiantis:2007}. 
 On the other hand, many unsupervised learning algorithms were proposed based on different heuristics how to groups objects in a feature space. Examples include K-Means, hierarchical clustering, Gaussian (EM) clustering DBSCAN etc. \cite{1427769}.    
Gradually, researchers started thinking how to leverage the output of an unsupervised method in a supervised learning, which led to the idea of transduction learning \cite{mepdh2014bis} and semi-supervised learning \cite{4724730}. Existing semi-supervised algorithms are hard to be adopted to our setting since they usually consider one supervised model and one unsupervised model. Goldberg and Zhu  \cite{Goldberg:2006} suggested a graph-based semi-supervised approach to address sentiment analysis of rating inference. However, their approach is also unable to combine multiple supervised and unsupervised sources.

Ensemble learning has been used for both unsupervised and supervised models. \cite{Dimitriadou2001,Strehl:2003,FernB04}  focus on ensemble-based unsupervised clustering. State-of-the-art supervised ensemble methods including Bagging \cite{bagging}, Boosting \cite{Schapire:1999}, XGBoost \cite{Chen:2016:XST},  
rule aggregation \cite{CIS-230716} are derived from diversified base classifiers. Meta-feature generation methods such as Stacking \cite{Reid2009} and adaptive mixture of experts \cite{Jacobs:1991} build a meta-learning method on top of existing base methods by considering the output of base methods as features.
Boosting, rule ensemble \cite{Gomes:2017}, Bayesian averaging model \cite{citeulike:474258} use training data to learn both the base models and how their outputs are combined; whereas methods like Bagging, random forest \cite{Breiman:2001}, random decision tree \cite{1565674}  train base models on the training data and make majority voting to come to a consensus. Therefore, they also need  huge labeled data and can work at the raw data level (unlike ours which works at the output level). Other research involved how to efficiently choose base models which are accurate and as diverse as possible so that the generalized error will be minimized \cite{Kuncheva:2004,Bauer:1999}.
Several methods have tried to incorporate  clustering  into the classification model in a {\em semi-supervised manner} (see  \cite{Zhu05}). Notable methods include SemiBoost \cite{Mallapragada2009}, ASSEMBLE \cite{Bennett:2002}, try-training \cite{Zhou:2005} etc. Their major focus was to learn from a limited amount of labeled data and plenty of unlabeled data. Few of them  considered only one base classifier and one base clustering method, rather than many which we do. Xiao et al. \cite{XIAO201673} created multiple clusters from the training set, and clusters of pair-wise classes are combined to generate multiple training samples. Each classifier is trained on a certain training sample, and a weighted voting approach is used to combine the results. 

Ensemble techniques in the unsupervised learning have mostly focused on combining multiple partitions in order to minimize the disagreement. Since different unsupervised models can produce different number of clusters and there is no label information associated with each cluster, it is hard to decide how to combine multiple clustering outputs. Existing ensemble clustering models differ from each other based on their selection of consensus function and the representation of the base results. Usually the base results are summarized using a graph \cite{FernB04,Strehl:2003} or a multi-dimensional array \cite{Singh:2003}. The model combination is usually performed by mapping the problem into information-theoretic \cite{Strehl:2003}, co-relational  clustering \cite{Gionis:2007} or median partitioning \cite{1565690,LiDJ07} optimization problem.

There have been very limited attempts to combine multiple  base classifiers and clustering methods. C$^3$E \cite{Acharya2011} is one of the early ensemble models that combine heterogeneous base methods. It uses multiple classifiers to generate an initial class-level probability distribution for each object. The distribution is then refined using cluster ensemble.  Gao et al.  proposed the BGCM model \cite{NIPS2009} and its extension \cite{NIPS2009_3855}, which derives an object-group bipartite graph out of the base models by embedding both objects and groups into a fixed dimension. BGCM was reported to outperform C$^3$E \cite{NIPS2009_3855}. The UPE model \cite{Ao:2014} casts this ensemble task as an unconstrained probabilistic embedding problem. It assumes that both objects
and groups (classes/clusters) have latent coordinates without constraints in a $D$-dimensional Euclidean space. Then a mapping from the embedded space into the space of the model yields a probabilistic generative process. It generates the final prediction by computing the distances between the object and the classes in the embedded space. 

\name~falls into the same group of algorithms which combine multiple supervised and unsupervised methods and work at the meta-output level without accessing the raw data. However, the fundamental mechanism of \name~differs from BGCM and UPE -- we provide a consensus at the object level as well as at the group level and force the groups to be constructed by homogeneous objects (objects with similar features). We propose an objective function to ensure that -- (i) the group characteristics is similar to the characteristics of its constituent objects, (ii) the more two objects are part of same base groups, the higher the probability that they are assigned to the same class,  (iii) class distribution of an object is similar to its average class distribution obtained from multiple base classifiers, and (iv) class distribution of a group is similar to the average class distribution of its constituent objects.   Extensive experiments on 13 different datasets confirm that \name~and \namei~ outperform both BGCM and UPE, and $12$ other baselines for every single dataset (see Section \ref{result}). Note that we do not consider C$^3$E as a baseline since BGCM was already reported to outperform  C$^3$E \cite{NIPS2009}. 

\begin{table}
\caption{Important notations and denotations}\label{tab:notation}
\vspace{-3mm}
\scalebox{1}{
\begin{tabular}{l|l}
\hline
Symbol & Definition\\\hline
$\mathcal{O}$& $\{ O_1,O_2,\cdots,O_N\}$, set of $N$ objects\\
$\mathcal{L}$ & $\{ 1,2,\cdots,l\}$, set of $l$ classes\\
$C_1$ & \# of base classifiers \\
$C_2$ & \# of base clustering methods\\
$G_1$ & \# of groups obtained from $C_1$ base classifiers\\
$G_2$ & \# of groups obtained from $C_2$ base clustering methods\\
$G$ & $G_1+G_2$, total number of base groups\\
$\mathcal{A}^m$ & $ \mathcal{A}^m \in \mathbf{R}_{n\times G}$, object-group membership matrix\\
$\mathcal{A}^c$ & $ \mathcal{A}^c \in \mathbf{R}_{N\times N}$, object-object co-occurrence matrix\\
$\mathcal{F}^g$ & $\mathcal{F}^g \in \mathbf{R}_{G\times l}$, class distribution for groups\\
$\mathcal{F}^o$ & $\mathcal{F}^o \in \mathbf{R}_{N\times l}$, class distribution for objects\\
$L(.)$ & Function returning the class of an object/group.\\
\hline
\end{tabular}}
\end{table}

\begin{figure}[!t]
\centering
\includegraphics[width=0.8\columnwidth]{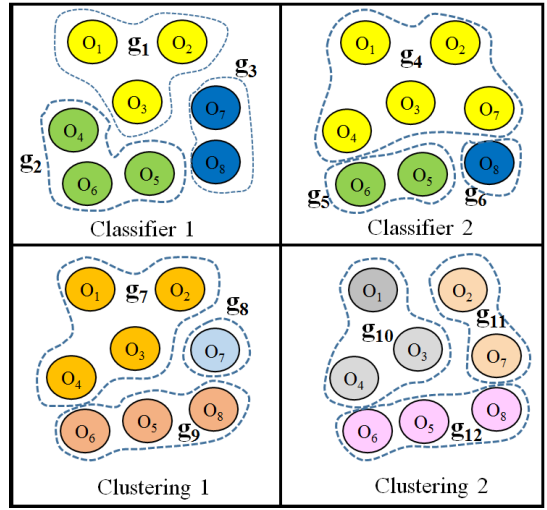}
\caption{(Color online) An illustrative example of the object grouping. There are $8$ objects with $3$ classes (yellow, green, blue).  Two classifiers predict the classes of the objects differently (top row). Two clustering methods group objects differently into three clusters (bottom row). Here, $C_1=2$, $C_2=2$, $G_1=G_2=6$, $G=12$. Note that we cannot label a cluster obtained from a base clustering method with a class. }\label{fig:demo}
\vspace{-5mm}
\end{figure}

\section{Methodology}\label{method}
Suppose we are given $N$ different objects $\mathcal{O}=\{O_1,O_2,\cdots,O_N\}$. We also know that they belong to $l$ different classes $\mathcal{L}=\{1,2,\cdots,l\}$. We are provided the outputs of $C_1$ base classifiers along with $C_2$  base clustering methods. For the sake of simplicity, suppose: (i) each object is assigned to only one class by a classifier and to only one cluster by a clustering method (i.e., disjoint clustering), and (ii) each base clustering method produces $l$  clusters\footnote{The generalization of this assumption is straightforward, and does not violate the solution of the problem.}. Following \cite{NIPS2009_3855,Ao:2014}, we call both ``classes'' and ``clusters'' discovered by base methods as ``base groups''. Therefore, from the base classifiers and base clustering methods we obtain $G_1=C_1\times l$ and $G_2=C_2\times l$ groups respectively, totaling $G=G_1+G_2$ base groups. Figure \ref{fig:demo} presents a toy example with 8 objects, 3 classes, 2 base classifiers and 2 base clustering methods. Each base method produces 3  groups, totaling $12$ base groups. Table \ref{tab:notation} summarizes our notations.  

From the output of the base methods, we can construct the following matrices:
\begin{defn}[Membership Matrix]
We define membership matrix as $\A^m|_{N\times G}$  where $\A^m_{ij}=1$, if object $O_i$ belongs to  group $g_j$, $0$ otherwise.
\end{defn}

\begin{defn}[Co-occurrence Matrix]
We define co-occurrence matrix as $\A^c|_{N\times N}$  where $\A^c_{ij}\in \mathbf{R}$ indicates the number of times two objects $O_i$ and $O_j$ co-occur together in the base groups.
\end{defn}

Moreover, we define two conditional probability matrices:
\begin{defn}[Object-class Matrix]
We define object-class matrix as $\F^o|_{N\times l}$ where each entry $\F^o_{ij}=P(L(O_i)=j|O_i)$ indicates the probability of an object $O_i$ being assigned to class $j$, where the function $L(.)$ returns the class of an object or a group. 
\end{defn}
\begin{defn}[Group-class Matrix]
We define group-class matrix as $\F^g|_{G\times l}$ where each entry $\F^g_{ij}=P(L(g_i)=j|g_i)$ indicates the probability of a group $g_i$ being labeled as class $j$.
\end{defn}

Similarly, we define $\Y^o$ ({\em resp.} $\Y^g$) as an average class distribution matrix corresponding to $\F^o$ ({\em resp.} $\F^g$) such that  $\Y^o_{ij}$ ({\em resp.} $\Y^g_{ij}$) indicates  the fraction of times object $O_i$ is labeled as class $j$ by the base classifiers ({\em resp.} fraction of objects in group $g_i$ labeled as class $j$ by the base classifiers). However, measuring $\Y^g_{ij}$ may not be straightforward because each of the $C_1$ base classifiers may produce a different class for each object. Therefore, we consider $C_1$ different instances of each object to calculate $\Y^g_{ij}$.

Note that both $\A^m$ and $\A^c$ are not normalized\footnote{Normalization is needed to show the convexity of the problem in Theorem \ref{theory:convex}.}. Therefore, we can learn a bi-stochastic matrix  each for $\A^m$ and $\A^c$. Wang et al. \cite{5694009} proposed different ways of generating a bi-stochastic matrix from an adjacency matrix using Bregman divergence. They concluded that Kullback-Leibler (KL) divergence is superior than Euclidean  distance for bi-stochastic matrix generation. Here we also use KL divergence to generate the bi-stochastic matrices using the methods suggested in \cite{5694009} as follows.

Without loss of generality, let us assume that $\A|_{N\times N} \in \{\A^m, \A^c\}$. We intend to generate a bi-stochastic matrix $\K$ that optimally approximates $\A$ in the KL divergence sense by solving the following optimization problem.

\begin{equation}\label{min1}
\begin{aligned}
& \underset{\mathcal{A}}{\text{min}}
& &KL(\K,\A)=\sum_{i,j}^N [\K_{ij}\log \frac{\K_{ij}}{\A_{ij}} - \K_{ij} + \A_{ij} ] \\
& \text{subject to}
& & \K \geq 0,\ \K\cdot \mathbb{1}= \mathbb{1},\ \K=\K^\top
\end{aligned}
\end{equation}
Here $\mathbb{1}$ is an all-ones matrix, and $\K^\top$ is the transpose of $\K$.

\begin{theorem}
The optimization problem (\ref{min1}) is a convex problem.
\end{theorem}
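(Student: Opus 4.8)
The plan is to read (\ref{min1}) correctly as an optimization over the bi-stochastic matrix $\K$, with the data matrix $\A$ held fixed and strictly positive (the ``$\min$ over $\mathcal{A}$'' in the display being a typo for $\min$ over $\K$). Under this reading, proving convexity splits into two independent and essentially routine verifications: (i) the feasible region carved out by the constraints is a convex set, and (ii) the objective $KL(\K,\A)$ is a convex function of $\K$. I would carry these out in that order and then simply invoke the definition of a convex program.

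For (i), I would note that the three constraints $\K\ge 0$, $\K\,\mathbb{1}=\mathbb{1}$, and $\K=\K^{\top}$ are, respectively, a family of linear inequalities in the entries of $\K$, a family of linear equalities (fixing each row sum to $1$), and a family of linear equalities (identifying $\K_{ij}$ with $\K_{ji}$). Hence the feasible set is the intersection of the nonnegative orthant with two affine subspaces, i.e.\ a polyhedron, and is therefore convex.

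For (ii), the key observation is that the objective is separable across the entries of $\K$: writing $a:=\A_{ij}>0$, we have $KL(\K,\A)=\sum_{i,j}\phi_{ij}(\K_{ij})$ with $\phi_{ij}(x)=x\log\frac{x}{a}-x+a$ on $x\ge 0$, adopting the usual convention $0\log 0=0$ so that $\phi_{ij}$ extends continuously to $x=0$. A one-line computation gives $\phi_{ij}'(x)=\log\frac{x}{a}$ and $\phi_{ij}''(x)=\frac1x>0$ for $x>0$, so each $\phi_{ij}$ is convex (strictly so on $(0,\infty)$, and convex on $[0,\infty)$ by continuity). Since a sum of convex functions, each of a single coordinate, is convex, $KL(\cdot,\A)$ is convex on the nonnegative orthant, and in particular on the feasible set.

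Combining (i) and (ii): we are minimizing a convex function over a convex set; equivalently, the inequality constraint $-\K\le 0$ is convex (indeed linear) and the equality constraints $\K\,\mathbb{1}=\mathbb{1}$ and $\K-\K^{\top}=0$ are affine, which is exactly the standard form of a convex optimization problem, proving the theorem. I do not expect a genuine obstacle here; the only point deserving a word of care is the behaviour of $x\log x$ near $0$ and the standing assumption $\A_{ij}>0$, which is needed both for $KL(\K,\A)$ to be finite and for the second-derivative argument to apply. This positivity is guaranteed by the way $\A$ is built from the base models (and can in any case be enforced by a negligible additive smoothing of $\A$).
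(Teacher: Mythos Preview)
Your proposal is correct and follows essentially the same route as the paper: both arguments observe that the constraints are linear (hence the feasible set is convex) and then verify convexity of the objective termwise via the second derivative $\phi''(x)=1/x\ge 0$ of the scalar function $x\mapsto x\log(x/a)-x+a$. Your version is in fact a bit more careful than the paper's, explicitly treating the boundary convention $0\log 0=0$ and the standing assumption $\A_{ij}>0$, and correctly identifying $\A_{ij}$ (not $\K_{ij}$) as the fixed quantity.
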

\begin{proof}
The constraints in (1) are all linear; therefore they are convex.
We only need to show the objective function is convex.

We will prove each individual term in (1) as convex. Let $J=\K_{ij}\log \frac{\K_{ij}}{\A_{ij}} - \K_{ij} + \A_{ij}=\K_{ij}\log \K_{ij} - \K_{ij} \log \A_{ij}  - \K_{ij} + \A_{ij}$. Since $\K_{ij}$ is constant, both $\A_{ij}$ and $-\K_{ij}\log\A_{ij}$ are convex. Moreover the second derivative of $\K_{ij}\log \K_{ij}$ is $\nabla^2_{\K_{ij}} \K_{ij} \log_{\K_{ij}}=\frac{1}{\K_{ij}}\geq 0$ (since $\K\geq 0$). Therefore $\K_{ij}\log \K_{ij}$ is also convex. This in turn proves that $J$ (as well as (1)) is convex. 
\end{proof}

\if{0}
\begin{proof}
The constraints in (1) are all linear; therefore they are convex.
We only need to show the objective function is convex.

We will prove each individual term in (1) as convex. Let $J=\K_{ij}\log \frac{\K_{ij}}{\A_{ij}} - \K_{ij} + \A_{ij}=\K_{ij}\log \K_{ij} - \K_{ij} \log \A_{ij}  - \K_{ij} + \A_{ij}$. Since $\K_{ij}$ is constant, both $\A_{ij}$ and $-\K_{ij}\log\A_{ij}$ are convex. Moreover the second derivative of $\K_{ij}\log \K_{ij}$ is $\nabla^2_{\K_{ij}} \K_{ij} \log_{\K_{ij}}=\frac{1}{\K_{ij}}\geq 0$ (since $\K\geq 0$). Therefore $\K_{ij}\log \K_{ij}$ is also convex. This in turn proves that $J$ (as well as (1)) is convex. 
\end{proof}
\fi

This convex problem can be solved by projecting $\A$ onto the constraints as mentioned in \cite{Inderjit} (see the pseudo-code in Algorithm \ref{alg:bi}). Following this, we obtain two bi-stochastic matrices $\K^m$ and $\K^c$ corresponding to $\A^m$ and $\A^c$ respectively.

\begin{algorithm2e}[!t]
\DontPrintSemicolon
\SetAlgoNoEnd
\hrule
\KwIn{Similarity matrix: $\A$, threshold: $\epsilon$}
\KwOut{Bi-stochastic matrix $\K$}
\hrule 
Initialize $\K=0$\\
Initialize $\K'=\A$\\
\While{$||\K- \K'||_F>\epsilon$}{
$\K=\K'$\\
$d_{i}=\sum_{j=1}^N \K'_{ij}, \forall i$\\ $\K_{ij}=\frac{\K'_{ij}}{d_i}, \forall i,j$ \\
$\K_{ij}=\K_{ji}=(\K_{ij}\K_{ji})^{\frac{1}{2}}, \forall i,j$
}
\Return $\K$;
 \hrule
\caption{Generating a bi-stochastic matrix as mentioned in \cite{Inderjit} (here $||.||_F$ is the  Frobenius norm).} \label{alg:bi}
\end{algorithm2e}

\subsection{Objective Function}\label{obj_func}

Our final objective function consists of four components generated by the following hypotheses:\\
{\bf (i) Similarity between a group and its constituent members:} If an object is a part of a group, the class distribution of both the object and the group should be similar. We capture this by the following expression:
\vspace{-2mm}
\begin{equation}
J1=\sum_{i=1}^N \sum_{j=1}^G \K^m_{ij}|| \F^o_{i.}-\F^g_{j.}||^2
\end{equation}
where $\F^o_{i.}$ and $\F^g_{j.}$ are the class distribution vectors of $i^{th}$ object and $j^{th}$ group respectively, and $||.||$ is the 2-norm of a vector.\\

\noindent{\bf (ii) Similarity between two objects inside a group:} The more two objects are assigned to the same groups, the higher the probability that they are in the same class (``co-occurrence principle''). We capture this via the following equation:

\begin{equation}
J2=\sum_{i=1}^N \sum_{j=1}^N  \K^c_{ij} ||\F^o_{i.} - \F^o_{j.}||^2
\end{equation}

\noindent{\bf (iii) Similarity between the object and its  average class distribution:} The final class distribution of an object should be closer to its average class distribution obtained from the base classifiers. We call this the ``consensus principle''. This can be captured by the following equation:
\begin{equation}
J3=\sum_{i=i}^N ||  \F^o_{i.} - \Y^o_{i.} ||^2
\end{equation}
where $\Y^o_{ij}$ denotes that fraction of times object $O_i$ is assigned to class $j$ by the base classifiers.\\

\noindent{\bf (iv) Similarity between the group and the its average class distribution:}  The final class distribution of a group should be closer to the average class distribution of its constituent objects. This is captured by the following equation:
\begin{equation}
J4=\sum_{j=1}^G ||\F^g_{j.} - \Y^g_{j.} ||^2
\end{equation}
where $\Y^g_{ij}$ denotes that fraction of objects in group $g_i$ labeled as $j$ by the base classifiers.

We combine these four hypotheses together to formulate the following objective function parameterized by $\alpha$, $\beta$, $\gamma$ and $\delta$:

\begin{equation}\label{min2}\small
\begin{aligned}\small
{\bf P} = & \underset{\F^o, \F^g}{\text{min}}\ \  J1+J2+J3+J4=\\
& \underset{\F^o, \F^g}{\text{min}}\ \ \frac{\alpha}{2} \sum_{i=1}^N \sum_{j=1}^G \K^m_{ij}|| \F^o_{i.}-\F^g_{j.}||^2 \\
& \ \ \ \ \ \ +  \frac{\beta}{2} \sum_{i=1}^N\sum_{j=1}^N \K^c_{ij} ||\F^o_{i.} - \F^o_{j.}||^2 \\
& \ \ \ \ \ \  + \gamma \sum_{i=1}^N ||  \F^o_{i.} - \Y^o_{i.} ||^2 \\
& \ \ \ \ \ \  + \delta \sum_{j=1}^G ||\F^g_{j.} - \Y^g_{j.} ||^2\\
& \text{subject to}\ \ 
0 \leq  \beta, \gamma, \delta \leq 1,\ 0 < \alpha \leq 1,\  \frac{\alpha}{2}+ \frac{\beta}{2}+\gamma+\delta=1,\\
&\ \ \ \ \ \ \ \ \ \ \ \ \ \ \ \ \F^o_{i.}\geq 0,\  |\F^o_{i.}|=1, \forall i=1:N\\
&\ \ \ \ \ \ \ \ \ \ \ \ \ \ \ \ \F^g_{j.}\geq 0,\  |\F^g_{j.}|=1, \forall j=1:G\\
\end{aligned}
\end{equation}
Here $|.|$ and $||.||$ are 1- and 2-norm of a vector respectively. Note that $\alpha$ is not allowed to be zero (this helps in proving Theorem \ref{stric_conv}). 
Later in Section \ref{sec:parameter}, we will see that second and third components following co-occurrence and consensus principles respectively
are the most important components in the objective function, and therefore higher value of $\beta$ and $\gamma$ leads to better accuracy. $\frac{\alpha}{2}$ is used instead of $\alpha$ to simplify the proof of Theorem \ref{theory:convex} (similarly for $\beta$).

Further, each individual component can be written using the matrix form as follows:
\vspace{2mm}
\begin{equation}
\begin{split}
J1&=\alpha\{tr(\F^{o\top}\F^{o}) + tr(\F^{g\top}\F^{g} - tr(\F^{o\top} \K^m \F^{o})) \}\\
J2&=\beta. tr(\F^{o\top}\L\F^{o})\\
J3&=\gamma.||\F^o - \Y^o ||^2\\ 
 J4&=\delta.||\F^g - \Y^g ||^2
\end{split}
\end{equation}
We use these matrix forms to prove Theorem \ref{theory:convex}.

\begin{theorem}\label{theory:convex}
The optimization problem {\bf P} mentioned in (\ref{min2}) is a convex quadratic problem.
\end{theorem}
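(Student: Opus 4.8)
The plan is to recognize $\mathbf{P}$ as the minimization of a convex quadratic function over a polytope, which is precisely what ``convex quadratic problem'' means; so it suffices to check (a) the feasible set is a convex polytope, (b) the objective $J1+J2+J3+J4$ is a polynomial of degree at most two in the entries of $\F^o,\F^g$, and (c) that quadratic is convex. Step (a) is immediate: the conditions on $\alpha,\beta,\gamma,\delta$ restrict only fixed parameters, and the remaining constraints $\F^o_{i.}\ge 0,\ |\F^o_{i.}|=1$ (and likewise $\F^g_{j.}\ge 0,\ |\F^g_{j.}|=1$) are linear, because for a nonnegative vector the $1$-norm is just the sum of coordinates. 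Hence the feasible region is a product of probability simplices, which is convex and closed.

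For (b) and (c), I would handle the four summands one at a time, using that a nonnegatively weighted sum of convex quadratics is again a convex quadratic. The terms $J3=\gamma\|\F^o-\Y^o\|^2$ and $J4=\delta\|\F^g-\Y^g\|^2$ are squared norms of affine functions of the variables ($\Y^o,\Y^g$ being constants), hence convex quadratics. For $J2=\frac{\beta}{2}\sum_{i,j}\K^c_{ij}\|\F^o_{i.}-\F^o_{j.}\|^2$ note that $\K^c\ge 0$ (it is the bi-stochastic matrix produced by Algorithm~\ref{alg:bi}), so each summand is a nonnegative scalar times the squared norm of the linear map $\F^o\mapsto\F^o_{i.}-\F^o_{j.}$; thus $J2$ is a convex quadratic in $\F^o$, equal in matrix form to $\beta\,tr(\F^{o\top}\L\F^o)$ with $\L$ the (positive semidefinite) graph Laplacian of $\K^c$. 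The term $J1=\frac{\alpha}{2}\sum_{i,j}\K^m_{ij}\|\F^o_{i.}-\F^g_{j.}\|^2$ is handled the same way jointly in $(\F^o,\F^g)$: $\K^m\ge 0$, so $J1\ge 0$ everywhere and $J1$ is a convex quadratic. Adding the four pieces gives a convex quadratic objective, and together with (a) this proves the theorem.

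As the matrix forms displayed just before the theorem suggest, one can instead prove (c) by exhibiting the Hessian of the objective in the stacked variable $(\mathrm{vec}\,\F^o,\mathrm{vec}\,\F^g)$ and showing it is positive semidefinite: the normalization of $\K^m$ collapses the $\|\F^o_{i.}\|^2$ and $\|\F^g_{j.}\|^2$ weights in $J1$ to $tr(\F^{o\top}\F^o)$ and $tr(\F^{g\top}\F^g)$, producing a block matrix whose diagonal blocks have the form ``constant times $I$ plus constant times $\L$'' and whose off-diagonal blocks are $-\alpha\K^m$ and $-\alpha\K^{m\top}$, and its positive semidefiniteness is forced anyway since $J1,J2\ge 0$ while $J3,J4$ are squared norms. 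The main obstacle — really the only nonroutine step — is this bookkeeping for $J1$: confirming that the normalization of $\K^m$ yields the clean trace form and that the resulting coupled block matrix is PSD; the Laplacian argument for $J2$, the affine-squared-norm argument for $J3,J4$, and the simplex argument for the constraints are all routine. I would also flag explicitly that this argument yields convexity but not \emph{strict} convexity of the objective, which additionally requires $\alpha>0$ and is the content of Theorem~\ref{stric_conv}.
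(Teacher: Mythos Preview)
Your proposal is correct and reaches the same conclusion as the paper, but the route you take for the key terms $J1$ and $J2$ is more direct than the paper's. The paper establishes convexity of $J2=tr(\F^{o\top}\L\F^o)$ by verifying the defining inequality $f(\theta\F_1^o+(1-\theta)\F_2^o)\le\theta f(\F_1^o)+(1-\theta)f(\F_2^o)$ explicitly: it expands the trace, reduces the difference to $\theta(\theta-1)\,tr[(\F_1^o-\F_2^o)^\top\L(\F_1^o-\F_2^o)]$, and then argues this is nonpositive because $\L$ (as a normalized graph Laplacian) is symmetric PSD, invoking a factorization $\L=QQ^\top$ and the nonnegativity of the trace of a Gram matrix. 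It then waves at $J1$ as ``similar.'' Your argument instead stays at the level of the original sum $\sum_{i,j}\K^c_{ij}\|\F^o_{i.}-\F^o_{j.}\|^2$: since $\K^c\ge 0$ entrywise, each summand is a nonnegative multiple of a squared linear form, hence convex quadratic, and the same observation handles $J1$ jointly in $(\F^o,\F^g)$ with no extra work. This is shorter and avoids the Laplacian/PSD bookkeeping altogether; the paper's approach, on the other hand, makes the Hessian structure (via $\L$) explicit, which dovetails with the matrix update formulas used later in Algorithm~\ref{alg:ec3}. Both are valid; yours is the cleaner proof of convexity per se.
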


\begin{proof}
To prove that {\bf P} is convex, we have to show that both the objective function and the constraints are convex. Since the constraints in {\bf P} are all linear, they are convex. We use $f(\theta x_1 +(1-\theta)x_2) - (\theta f(x_1)+(1-\theta)f(x_2)) \leq 0, \forall \theta \in [0,1] $ to prove that the objective function is convex. 

We know that every norm is convex. Therefore, $J3=\gamma.||\F^o - \Y^o ||^2$ and 
 $J4=\delta.||\F^g - \Y^g ||^2$ are convex. 

To prove that $J2=tr(\F^{o\top}\L\F^{o})$ (ignoring the constant $\beta$) is convex, let $f=tr(\F^{o\top}\L\F^{o})$. Then
\begin{equation}\label{1}\small
\begin{split}
&f(\theta \F_1^o +(1-\theta)\F_2^o)\\
&=tr[(\theta \F_1^o + (1-\theta)F_2^o)^\top \L (\theta \F_1^o + (1-\theta)F_2^o)]\\
&=tr[\theta^2\F_1^{o\top} \L \F_1^o + (1-\theta)^2 \F_2^{o\top} \L \F_2^o\\
&+ \theta(1-\theta)(\F_1^{o\top} \L\F_2^o+\F_2^{o\top} \L \F_1^o)]
\end{split}
\end{equation}
Moreover, 
\begin{equation}\label{2}\small
\begin{split}
\theta f(\F_1^o)+(1-\theta)f(\F_2^o)=\theta tr(\F_1^{o\top} \L \F_1^o) + (1-\theta) tr(\F_2^{o\top}\L \F_2^o) .
\end{split}
\end{equation}

Substituting Eq. \ref{2} from Eq. \ref{1} yields
\vspace{2mm}
\[
\begin{split}\small
&f(\theta \F_1^o +(1-\theta)\F_2^o) - \theta f(\F_1^o)-(1-\theta)f(\F_2^o)\\
&=tr[\theta(\theta -1)(\F_1^{o\top} \L\F_1^o+\F_2^{o\top} \L\F_2^o-\F_1^{o\top} \L\F_2^o - \F_2^{o\top} \L\F_1^o)]\\
&=\theta(\theta-1)tr[(\F_1^o -\F_2^o)^\top \L(\F_1^o-\F_2^o)]
\end{split}
\]
We have to show that $\theta(\theta-1)tr[(\F_1^o-\F_2^o)^\top \L(\F_1^o-\F_2^o)]\leq 0$, or, $\theta(1-\theta)tr[(\F_1^o-\F_2^o)^\top \L(\F_1^o-\F_2^o)]\geq 0$ .

Since $\L$ is normalized graph Laplacian  matrix, it is positive semi-definite. Moreover, since both $I$ and $\W$ are symmetric, $\L$ is also symmetric. Therefore, we can write $\L=QQ^\top$ (where $Q\in \mathbf{R}^{N\times N}$). Further assume that $\Delta \F=\F_1^o-\F_2^o$. Therefore, $tr[(\F_1^o-\F_2^o)^\top \L(\F_1^o-\F_2^o)]=tr(\Delta \F QQ^\top \Delta \F)=tr[(Q\Delta\F)^\top (Q\Delta\F)]$.

{\bf Claim:} Let $\S=(Q\Delta\F)^\top (Q\Delta\F)$. Then $\S$ is symmetric and positive semi-definite.

{\bf Proof:}
By definition, $\S$ is symmetric. Moreover for any $X$, $X^\top \S X=X^\top (Q\Delta\F)^\top (Q\Delta\F) X = (Q\Delta \F X)^\top (Q\Delta \F X)=\langle(Q\Delta \F X)\cdot(Q\Delta \F X) \rangle \geq 0$ (note that  $Q\Delta \F X$ is a vector), where $\langle.\rangle$ indicates inner product of two vectors. Therefore, $\S$ is positive semi-definite.

Since $\S$ is symmetric and positive semi-definite, all its eigen vectors $\lambda_i$ are non-negative. We also know that $tr(\S)=\sum_i \lambda$. Therefore, $tr(\S)\geq 0 \Rightarrow tr[(Q\Delta\F)^\top (Q\Delta\F)=tr[(\F_1^o-\F_2^o)^\top \L (\F_1^o-\F_2^o)]\geq 0$.

Since $0\leq \theta \leq 1$,  $\theta(1-\theta)tr[(\F_1^o-\F_2^o)^\top \L(\F_1^o-\F_2^o)]\geq 0$. Therefore, $J2$ is also convex.

In the similar way, it is easy to show that each individual component of $J1=\alpha\{tr(\F^{o\top}\F^{o}) + tr(\F^{g\top}\F^{g} - tr(\F^{o\top} \K^m \F^{o})) \}$ is also convex. 

Therefore, $\mathbf{P}$ is convex.
\end{proof}

\subsection{Proposed Algorithm: \name}\label{algo}

We solve the convex quadratic optimization problem mentioned in (\ref{min2}) using standard block coordinate descent method \cite{XuY13,NIPS2009_3855}. In the $t$th iteration, if we fix $\F^{o(t)}$, the objective function boils down to the summation of the quadratic components w.r.t $\F^{g(t)}$, and it is strictly convex (see Theorem \ref{stric_conv}). Therefore, assigning $\bigtriangledown_{\F^{g(t)}_{j.}} \mathbf{P}=0$ produces the unique global minimum of the objective function w.r.t $\F^{g(t)}_{j.}$:
\begin{equation}\label{update1}\small
\F^{g(t)}_{j.}=\frac{\alpha\sum_{i=1}^N  \K_{ij}^m \F^{o(t-1)}_{i.} +2\delta \Y_{j.}^g}{\alpha \sum_{i=1}^N \K_{ij}^m + 2\delta}
\end{equation}

Similarly, if we fix $\F^{g(t)}$ 
the objective function becomes strictly convex (see Theorem \ref{stric_conv}),
and $\bigtriangledown_{\F^{o(t)}_{i.}} \mathbf{P}=0$ produces the unique global minimum w.r.t $\F^{o(t)}_{i.}$:

\begin{equation}\label{update2}\tiny
\F^{o(t)}_{i.}=\frac{\alpha \sum_{j=1}^G \K^m_{ij}\F^{g(t)}_{j.} + \beta(2\sum_{j=1}^N \K_{ij}^c \F_{j.}^{o(t)} - \K_{ii}^o \F^{o(t)}_{i.}) + 2 \gamma \Y^o_{i.}}
{\alpha \sum_{j=1}^G \K^m_{ij} + \beta ( 2 \sum_{j=1}^N \K_{ij}^c - \K_{ii}^c  ) + 2 \gamma}
\end{equation}

The pseudo-code of the proposed \name~ ({\bf E}nsemble {\bf C}lassifier by combining both {\bf C}lassification and {\bf C}lustering) algorithm is given in Algorithm \ref{alg:ec3}. In the pseudo-code, we provide the matrix form of the updates mentioned in Equations \ref{update1} and \ref{update2}.   Intuitively, in Step 5 of Algorithm \ref{alg:ec3}, the class distribution $\F^g$ of each group combines the average class distribution $\Y^g$ and the information obtained from the nodes' neighbors. Then   the updated class distribution is propagated to those neighbors by updating $\F^o$ in Step 6.

\begin{theorem}\label{stric_conv}
If we fix $\F^{o(t)}$ (resp. $\F^{g(t)}$) in {\bf P}, the resulting objective function is strictly convex w.r.t $\F^{g(t)}$ (resp. $\F^{o(t)}$).
\end{theorem}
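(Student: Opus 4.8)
The plan is to restrict the objective $J1+J2+J3+J4$ of $\mathbf{P}$ to one variable block at a time and to show that the Hessian of the remaining quadratic is positive definite, which is exactly strict convexity. First I would fix $\F^{o}$. Then $J2$ and $J3$ are constant, so the only $\F^{g}$-dependent part is $J1+J4=\frac{\alpha}{2}\sum_{i,j}\K^m_{ij}\|\F^o_{i.}-\F^g_{j.}\|^2+\delta\sum_{j}\|\F^g_{j.}-\Y^g_{j.}\|^2$. The key observation is that this is a separable sum $\sum_{j=1}^{G}\phi_j(\F^g_{j.})$ plus a constant, where each $\phi_j$ is a quadratic in the single row $\F^g_{j.}\in\mathbf{R}^{l}$ with Hessian $\big(\alpha\sum_{i}\K^m_{ij}+2\delta\big)I_l$. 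I would then argue this matrix is positive definite: $\delta\ge 0$; $\sum_i\K^m_{ij}>0$ because base group $g_j$ is non-empty, so column $j$ of $\A^m$, and hence of the derived $\K^m$, has a positive entry; and $\alpha>0$ by the constraint in (\ref{min2}); thus the scalar $\alpha\sum_i\K^m_{ij}+2\delta$ is strictly positive. Hence every $\phi_j$ is strictly convex, and a separable sum of strictly convex functions, each in its own block of coordinates, is strictly convex (the additive constant is harmless). In particular this shows the denominator of update (\ref{update1}) is positive, so that update is the unique minimizer.

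Next I would fix $\F^{g}$, so that $J4$ is constant and the $\F^{o}$-dependent part is $J1+J2+J3$. The pieces $J1$ and $J3$ again split row-wise over objects: $J1+J3=\sum_{i=1}^{N}\psi_i(\F^o_{i.})$ plus a constant, where $\psi_i$ is a quadratic in $\F^o_{i.}$ with Hessian $\big(\alpha\sum_{j}\K^m_{ij}+2\gamma\big)I_l$, positive definite by the same reasoning (now $\alpha>0$ and $\sum_j\K^m_{ij}>0$, since object $O_i$ lies in at least one base group). Hence $J1+J3$ is strictly convex in $\F^o$. The remaining term $J2=\beta\, tr(\F^{o\top}\L\F^o)$ does not decouple over the rows of $\F^o$, but $\L$ is a normalized graph Laplacian and therefore positive semi-definite (established in the proof of Theorem \ref{theory:convex}); writing $\L=QQ^\top$ one sees $tr(\F^{o\top}\L\F^o)=\|Q^\top\F^o\|_F^2$, the squared Frobenius norm of a linear image of $\F^o$, which is convex. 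Since a convex function plus a strictly convex function is strictly convex, $J1+J2+J3$ — and hence the objective of $\mathbf{P}$ with $\F^g$ held fixed, which differs from it only by a constant — is strictly convex in $\F^o$, which also makes the denominator of update (\ref{update2}) positive. The two cases being symmetric, the theorem follows.

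The main obstacle is the $\F^{o}$ direction: because $J2$ couples distinct rows of $\F^o$ through $\L$, one cannot conclude by a purely per-object argument as in the $\F^{g}$ case, so the remedy is to split off $J2$ and use ``strictly convex $+$ convex $=$ strictly convex''. The second thing to pin down is exactly where the hypothesis $\alpha>0$ is needed: among $\alpha,\beta,\gamma,\delta$ only $\alpha$ is bounded away from $0$, and it is precisely the coefficient of $J1$, whose Hessian contributes a strictly positive multiple of the identity to each diagonal block (via the positive row and column sums of $\K^m$). If $\alpha$ were allowed to vanish, then taking for instance $\delta=0$ (respectively $\gamma=0$), which the remaining constraints still permit, the Hessian of the objective restricted to $\F^g$ (respectively $\F^o$) would fail to be positive definite, and both strict convexity and the uniqueness of the block updates (\ref{update1})--(\ref{update2}) would break; this is the reason for the remark after (\ref{min2}) that $\alpha$ cannot be zero.
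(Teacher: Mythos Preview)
Your proof is correct and follows the same Hessian-positivity approach as the paper: compute the quadratic form in each block variable and show the leading coefficient is strictly positive, using $\alpha>0$ and the nonvanishing of the relevant row/column sums of $\K^m$.

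Your handling of the $\F^o$ case is in fact cleaner than the paper's. The paper differentiates $J2$ directly and writes a scalar ``second derivative'' that mixes diagonal and off-diagonal contributions of $\K^c$ without explicitly addressing the coupling between distinct rows $\F^o_{i.}$; it then asserts positivity of that scalar. You instead isolate $J1+J3$ as a separable, strictly convex part (Hessian $(\alpha\sum_j\K^m_{ij}+2\gamma)I_l$ on each block) and add $J2=\beta\,tr(\F^{o\top}\L\F^o)$ as a globally convex term via the positive semidefiniteness of $\L$, invoking ``strictly convex $+$ convex $=$ strictly convex''. This sidesteps the need to control the cross-row terms of $J2$ and makes the role of $\alpha>0$ transparent. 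Likewise, for the $\F^g$ case you argue block-by-block that each $\phi_j$ has a positive-definite Hessian, whereas the paper only exhibits positivity of the \emph{sum} $\sum_j(\alpha\K^m_j+2\delta)$ using ``$\exists j,\ \K^m_j\neq 0$'', which is weaker than what strict convexity requires; your per-block argument is the right one.
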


\begin{proof}
To prove that if we fix $\F^{o(t)}$ in {\bf P} the resultant objective function is strictly convex, we need to show that $\nabla^2_{\F^{g(t)}}{\bf P}>0$.
Assume all values $\F^{o(t)}$ as constant $C$ in Equation (6) of the main text, we obtain:
\begin{equation}
\begin{split}
&\nabla_{\F^{g(t)}_{j.}}{\bf P}=-\alpha \K^m_j |C-\F^{g(t)}|+2\delta|\F^{g(t)}_{j.}-\Y^{g(t)}_{j.}|\\
&\nabla^2_{\F^{g(t)}_{j.}}{\bf P}=\alpha \K^m_j+2\delta \\
& \nabla^2_{\F^{g(t)}}{\bf P} = \sum_{j=1}^G \nabla^2_{\F^{g(t)}_{j.}}{\bf P} = \sum_{j=1}^G \alpha \K^m_j+2\delta > 0 \\
&(\text{Since $\alpha \neq 0$, and $\exists j,\K^m_j\neq 0$  })
\end{split}
\end{equation}

Therefore it is strictly convex.

Similarly to prove that if we fix $\F^{g(t)}$ in {\bf P} the resultant objective function is strictly convex, we need to show that $\nabla^2_{\F^{o(t)}}{\bf P}>0$. Assume all values $\F^{g(t)}$ as constant $D$ in Equation (6) of the main text, we obtain:
\begin{equation}
\begin{split}
\nabla_{\F^{o(t)}_{i.}}{\bf P}=&\alpha  \K^m_i |\F^{o(t)}_{i.} - D| + \beta \sum_{j=1}^N \K_{ij}^c |\F^{o(t)}_{i.} -\F^{o(t)}_{j.}| +\\ 
&\beta\sum_{k=1\wedge k\neq i}^N|\F^{o(t)}_{k.} - \F^{o(t)}_{i.}|+2\gamma |\F^{o(t)}_i-\Y^o_{i.}|\\
\end{split}
\end{equation}

\begin{equation}
\nabla^2_{\F^{o(t)}_{i.}}{\bf P}=\alpha \K^m_i+\beta (\sum_{j=1}^N \K^c_{ij}-\K^{c}_{ii})+\beta+2\gamma > 0
\end{equation}
Therefore it is also strictly convex.
\end{proof}

\if{0}
\begin{proof}
To prove that if we fix $\F^{o(t)}$ in {\bf P} the resultant objective function is strictly convex, we need to show that $\nabla^2_{\F^{g(t)}}{\bf P}>0$.
Assume all values $\F^{o(t)}$ as constant $C$ in Equation (6) of the main text, we obtain:
\begin{equation}
\begin{split}
&\nabla_{\F^{g(t)}_{j.}}{\bf P}=-\alpha \K^m_j |C-\F^{g(t)}|+2\delta|\F^{g(t)}_{j.}-\Y^{g(t)}_{j.}|\\
&\nabla^2_{\F^{g(t)}_{j.}}{\bf P}=\alpha \K^m_j+2\delta \\
& \nabla^2_{\F^{g(t)}}{\bf P} = \sum_{j=1}^G \nabla^2_{\F^{g(t)}_{j.}}{\bf P} = \sum_{j=1}^G \alpha \K^m_j+2\delta > 0 \\
&(\text{Since $\alpha \neq 0$, and $\exists j,\K^m_j\neq 0$  })
\end{split}
\end{equation}

Therefore it is strictly convex.

Similarly to prove that if we fix $\F^{g(t)}$ in {\bf P} the resultant objective function is strictly convex, we need to show that $\nabla^2_{\F^{o(t)}}{\bf P}>0$. Assume all values $\F^{g(t)}$ as constant $D$ in Equation (6) of the main text, we obtain:
\begin{equation}
\begin{split}
\nabla_{\F^{o(t)}_{i.}}{\bf P}=&\alpha  \K^m_i |\F^{o(t)}_{i.} - D| + \beta \sum_{j=1}^N \K_{ij}^c |\F^{o(t)}_{i.} -\F^{o(t)}_{j.}| +\\ 
&\beta\sum_{k=1\wedge k\neq i}^N|\F^{o(t)}_{k.} - \F^{o(t)}_{i.}|+2\gamma |\F^{o(t)}_i-\Y^o_{i.}|\\
\end{split}
\end{equation}

\begin{equation}
\nabla^2_{\F^{o(t)}_{i.}}{\bf P}=\alpha \K^m_i+\beta (\sum_{j=1}^N \K^c_{ij}-\K^{c}_{ii})+\beta+2\gamma > 0
\end{equation}
Therefore it is also strictly convex.
\end{proof}
\fi

\begin{theorem}\label{theorem:solu}
The solution obtained from Algorithm \ref{alg:ec3} satisfies the constraints  of $\mathbf{P}$ mentioned in (\ref{min2}).  
\end{theorem}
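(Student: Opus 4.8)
The plan is to split the constraint set of $\mathbf{P}$ into two parts. The relations $0\le\beta,\gamma,\delta\le1$, $0<\alpha\le1$ and $\tfrac{\alpha}{2}+\tfrac{\beta}{2}+\gamma+\delta=1$ involve only the hyperparameters, which are fixed once (precisely so as to satisfy these relations) and are never touched by Algorithm~\ref{alg:ec3}; so there is nothing to prove for them. The substantive claim is that the iterates produced by the block coordinate descent updates (\ref{update1}) and (\ref{update2}) always have rows in the probability simplex $S=\{x\in\mathbf{R}^l: x\ge 0,\ |x|=1\}$, i.e. $\F^o_{i.}\in S$ and $\F^g_{j.}\in S$ for all $i,j$. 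I would prove this by induction on the iteration index $t$.

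For the base case, the initialization step of Algorithm~\ref{alg:ec3} sets the rows of $\F^{o(0)}$ and $\F^{g(0)}$ to probability distributions (e.g. to $\Y^o$ and $\Y^g$, whose rows are empirical label fractions and hence non-negative and sum to one, so lie in $S$). For the inductive step I would first treat update (\ref{update1}). Fixing a group $j$ and rewriting (\ref{update1}) as
\[
\F^{g(t)}_{j.}=\frac{\sum_{i=1}^N (\alpha\K^m_{ij})\,\F^{o(t-1)}_{i.}+(2\delta)\,\Y^g_{j.}}{\sum_{i=1}^N (\alpha\K^m_{ij})+2\delta},
\]
one sees that the coefficients $\alpha\K^m_{ij}$ and $2\delta$ are non-negative (as $\K^m\ge0$, $\alpha>0$, $\delta\ge0$), their sum equals the denominator, and the denominator is strictly positive because $\alpha>0$ and every base group contains at least one object, so $\sum_i\K^m_{ij}>0$. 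Thus $\F^{g(t)}_{j.}$ is a convex combination of the vectors $\F^{o(t-1)}_{i.}$ (in $S$ by the inductive hypothesis) and $\Y^g_{j.}$ (in $S$ by construction); summing its entries over the $l$ classes the numerator collapses to $\sum_i\alpha\K^m_{ij}+2\delta$, exactly the denominator, so $|\F^{g(t)}_{j.}|=1$, while non-negativity is immediate. Hence every row of $\F^{g(t)}$ lies in $S$.

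The argument for (\ref{update2}) is the same in spirit: writing $2\sum_{j}\K^c_{ij}\F^{o(t)}_{j.}-\K^c_{ii}\F^{o(t)}_{i.}=\sum_{j}\K^c_{ij}\F^{o(t)}_{j.}+\sum_{j\neq i}\K^c_{ij}\F^{o(t)}_{j.}$ exhibits the right-hand side of (\ref{update2}) as a non-negative combination of the vectors $\F^{g(t)}_{j.}$ (just shown to be in $S$), the rows $\F^{o}_{j.}$, and $\Y^o_{i.}$, with weights $\alpha\K^m_{ij}$, $\beta\K^c_{ij}$ (counted once for $j$ and once for $j\neq i$) and $2\gamma$ whose total equals the denominator $\alpha\sum_j\K^m_{ij}+\beta(2\sum_j\K^c_{ij}-\K^c_{ii})+2\gamma$; this denominator is strictly positive since $\alpha>0$ and $\sum_j\K^m_{ij}>0$ (object $i$ belongs to at least one group). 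The one delicate point — which I expect to be the only real obstacle — is that $\F^{o(t)}$ appears on both sides of (\ref{update2}), so this update is implicit / Gauss--Seidel-type rather than a literal convex combination. I would dispatch it by noting that the rows are refreshed sequentially: when row $i$ is computed, the rows substituted on the right are either rows of $\F^{o(t)}$ already updated in the current sweep (in $S$ by a secondary induction on the row index) or rows of $\F^{o(t-1)}$ (in $S$ by the outer hypothesis); under either reading (\ref{update2}) writes $\F^{o(t)}_{i.}$ as a convex combination of points of $S$, so $\F^{o(t)}_{i.}\in S$. Everything else is a routine ``non-negative weights that sum to the denominator'' check.

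Combining the two inductive steps, induction on $t$ shows that every iterate — in particular the pair $(\F^o,\F^g)$ returned by Algorithm~\ref{alg:ec3} — has all rows in $S$; together with the hyperparameter relations this certifies that the returned solution is feasible for $\mathbf{P}$, which is the assertion of the theorem.
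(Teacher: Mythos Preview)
Your proposal is correct and follows essentially the same route as the paper: induction on the iteration index~$t$, showing that each update writes a row as a convex combination of vectors already known to lie in the simplex $S$. The paper's proof is terser---it spells out the sum-to-one check for update~(\ref{update1}) and then dismisses update~(\ref{update2}) with ``Similarly''---whereas you are more careful on two points the paper glosses over: the strict positivity of the denominators (via $\alpha>0$ and $\sum_j\K^m_{ij}>0$) and, especially, the fact that $\F^{o(t)}$ appears on both sides of (\ref{update2}), which you resolve by a Gauss--Seidel reading with a secondary induction on the row index.
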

\begin{proof}
According to Step 1 of Algorithm 1 (in the main text), the initialization of both $\F^o$ and $\F^g$ should satisfy the constraints. Therefore, $\F^{o(1)}_{i.}, \F^{g(1)}_{j.}\geq 0$ and  $|\F^{o(1)}_{i.}|=1$ and  $|\F^{g(1)}_{j.}|=1, i=1:N \text{ and } j=1:G$. Moreover by definition, both the average voting matrices $\Y^o$ and $\Y^g$ satisfy the constraints, i.e., $|\Y^o_{i.}|=1$ and $|\Y^g_{i.}|=1$ . 

Let us prove the theorem by induction. Suppose, at iteration $(t-1)$ the solution satisfies the constraints, i.e., $|\F^{o(t-1)}_{i.}|=1$ and $|\F^{g(t-1)}_{j.}|=1, \forall i,j$.. From Equation 10, we obtain:
\[
\begin{split}\tiny
|\F^{g(t)}_{j.}|=\sum_{p=1}^l \F^{g(t)}_{jp}&=  \frac{\alpha  \sum_{p=1}^l\sum_{i=1}^N  \K_{ij}^m \F^{o(t-1)}_{ip} +2\delta \sum_{p=1}^l \Y_{ip}^g}{\alpha \sum_{i=1}^N \K_{ij}^m + 2\delta}\\
&=\frac{\alpha  \sum_{i=1}^N  \K_{ij}^m |\F^{o(t-1)}_{i.}| +2\delta  |\Y_{i.}^g|}{\alpha \sum_{i=1}^N \K_{ij}^m + 2\delta}=1
\end{split}
\]
Similarly, we can show that $|\F^{o(t)}_{i.}|=1$. In addition, it is clear that $\F^{g(t)}_{j.}, \F^{o(t)}_{i.}\geq 0$. Therefore, the theorem is proved.
\end{proof}

\if{0}
\begin{proof}
According to Step 1 of Algorithm 1 (in the main text), the initialization of both $\F^o$ and $\F^g$ should satisfy the constraints. Therefore, $\F^{o(1)}_{i.}, \F^{g(1)}_{j.}\geq 0$ and  $|\F^{o(1)}_{i.}|=1$ and  $|\F^{g(1)}_{j.}|=1, i=1:N \text{ and } j=1:G$. Moreover by definition, both the average voting matrices $\Y^o$ and $\Y^g$ satisfy the constraints, i.e., $|\Y^o_{i.}|=1$ and $|\Y^g_{i.}|=1$ . 

Let us prove the theorem by induction. Suppose, at iteration $(t-1)$ the solution satisfies the constraints, i.e., $|\F^{o(t-1)}_{i.}|=1$ and $|\F^{g(t-1)}_{j.}|=1, \forall i,j$.. From Equation 10, we obtain:
\[
\begin{split}\small
|\F^{g(t)}_{j.}|=\sum_{p=1}^l \F^{g(t)}_{jp}&=  \frac{\alpha  \sum_{p=1}^l\sum_{i=1}^N  \K_{ij}^m \F^{o(t-1)}_{ip} +2\delta \sum_{p=1}^l \Y_{ip}^g}{\alpha \sum_{i=1}^N \K_{ij}^m + 2\delta}\\
&=\frac{\alpha  \sum_{i=1}^N  \K_{ij}^m |\F^{o(t-1)}_{i.}| +2\delta  |\Y_{i.}^g|}{\alpha \sum_{i=1}^N \K_{ij}^m + 2\delta}=1
\end{split}
\]
Similarly, we can show that $|\F^{o(t)}_{i.}|=1$. In addition, it is clear that $\F^{g(t)}_{j.}, \F^{o(t)}_{i.}\geq 0$. Therefore, the theorem is proved.
\end{proof}
\fi

\begin{theorem}
The solution of the optimization problem {\bf P} is feasible and optimal.
\end{theorem}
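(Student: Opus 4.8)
The plan is to split the statement into its two assertions — feasibility and optimality — and obtain each from the machinery already in place. Feasibility is essentially immediate: Theorem~\ref{theorem:solu} shows that every iterate $(\F^{o(t)},\F^{g(t)})$ produced by Algorithm~\ref{alg:ec3} has nonnegative rows summing to one, i.e.\ lies in the feasible region of $\mathbf{P}$. That region is the Cartesian product of products of probability simplices, hence closed (and compact); therefore any limit point of the iterate sequence is again feasible, and the returned solution is feasible.

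For optimality I would argue as follows. Since the feasible set is compact, the iterate sequence has at least one limit point. Each half-step of the block coordinate descent is, by construction (Equations~\eqref{update1} and~\eqref{update2}), the exact minimizer of $\mathbf{P}$ over one block with the other held fixed, and by Theorem~\ref{stric_conv} that restricted objective is \emph{strictly} convex, so this block minimizer is unique. Consequently the objective values $\mathbf{P}(\F^{o(t)},\F^{g(t)})$ form a non-increasing sequence which, being a sum of nonnegative terms, is bounded below and hence convergent. The key step is then the classical convergence theorem for two-block Gauss--Seidel (block coordinate) minimization \cite{XuY13}: when the objective is continuously differentiable, there are exactly two coordinate blocks, the constraint set splits as the product of the (convex, closed) blockwise feasible sets, and each block subproblem has a unique minimizer, every limit point of the sequence is a stationary point of the full problem. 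Because $\mathbf{P}$ is convex by Theorem~\ref{theory:convex}, such a stationary point is a global minimizer; concretely, if $(\bar\F^o,\bar\F^g)$ is a coordinatewise minimum then $\langle\nabla_{\F^o}\mathbf{P},\ \F^o-\bar\F^o\rangle\ge 0$ for every feasible $\F^o$ and $\langle\nabla_{\F^g}\mathbf{P},\ \F^g-\bar\F^g\rangle\ge 0$ for every feasible $\F^g$, and adding these two inequalities yields the first-order optimality condition $\langle\nabla\mathbf{P}(\bar\F^o,\bar\F^g),\ (\F^o,\F^g)-(\bar\F^o,\bar\F^g)\rangle\ge 0$ over the entire feasible set, which for a convex differentiable objective certifies global optimality. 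Thus the solution of $\mathbf{P}$ returned by Algorithm~\ref{alg:ec3} is both feasible and optimal.

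The main obstacle is the block-coordinate-descent convergence step: one must check its hypotheses with some care — smoothness of the quadratic objective, the presence of exactly two blocks, the separability of the simplex constraints across the $\F^o$ and $\F^g$ blocks, and uniqueness of each block minimizer (supplied by Theorem~\ref{stric_conv}). Uniqueness is not a cosmetic detail: without it two-block Gauss--Seidel can in principle cycle without converging, and it is precisely the strict convexity of each restricted subproblem, together with the product structure of the feasible set, that both guarantees convergence to a stationary point and legitimizes the ``add the two first-order inequalities'' argument that upgrades a coordinatewise minimum to a global one.
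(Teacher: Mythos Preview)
Your approach is essentially the same as the paper's: feasibility from Theorem~\ref{theorem:solu}, optimality by combining the convexity established in Theorem~\ref{theory:convex} with the fact that the block coordinate descent reaches a stationary point. The paper's own argument is terse to the point of being incomplete --- it simply says ``$\mathbf{P}$ is convex, so any local minimum is global, hence the solution is optimal'' without ever arguing that the iterates of Algorithm~\ref{alg:ec3} actually converge to a local (or even stationary) point. You fill precisely this gap: you invoke the standard two-block Gauss--Seidel convergence result (compact product feasible set, smooth objective, unique block minimizers from Theorem~\ref{stric_conv}) to obtain a stationary limit point, and then use convexity to upgrade stationarity to global optimality via the first-order condition. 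That is the right way to make the paper's sketch rigorous, and your explicit checklist of hypotheses (two blocks, separable simplex constraints, strict convexity of each subproblem) is exactly what the paper leaves implicit.
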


\begin{proof}
Theorem \ref{theorem:solu} guarantees that the solution obtained from $\mathbf{P}$ satisfies the constraints if the initialization of both $\F^g(1)$ and $\F^o(1)$ satisfy the constraints. Moreover, we have proved in Theorem \ref{theory:convex} that $\mathbf{P}$ is convex. Therefore, any local minima is also a global minima. So the solution of the problem is both feasible and optimal.
\end{proof}
 
{\bf Handling Class Imbalance Problem:} A deeper investigation of Equations \ref{update1} and \ref{update2}
may reveal that \name~ tends to discover balanced classes. Equation \ref{update1} assigns equal weight to all the objects inside a group, and if most of the objects in the group belong to the majority class, the class distribution of the group $\F^g_{j.}$ will be biased towards the majority class. This in turn makes the class distribution of the objects $\F^o_{i.}$ biased towards the majority class in Equation \ref{update2}.  A simple solution is to perform a column-wise normalization of the objective-group membership matrix $\A^m$ as follows: $\B^m_{ij}=\frac{\A^m_{ij}}{\sum_{i=1}^N \A^m_{ij}}$, and create the bi-stochastic matrix $\K^m$ to approximate $\B^m$ \cite{NIPS2009_3855}. In the rest of the paper, we  call this version of the algorithm  \namei~ (abbreviation  of  `\name~that handles class-\textbf{i}mbalance'). We also show that \namei~performs as well as \name~in most cases or even better than \name~in some cases (See Table \ref{tab:result}). Therefore, unless otherwise mentioned, the results obtained from \namei~are reported in this paper. However, most of the characteristics of \namei~are similar to \name.


{\bf Difference of \name~ from Existing Ensemble Models:} Existing supervised ensemble classifiers such as Bagging \cite{bagging}, Boosting \cite{Schapire:1999} train different base classifiers on different samples of the training set to control `bias' and `variance', whereas our method is built on a different setting where it  leverages the outputs of both supervised and unsupervised models and assigns high weight to the model which better approximates the outputs of other models. In this sense, it is also different from the traditional majority voting models. It is also different from the ensemble clustering methods such as \cite{Strehl:2003,Domeniconi:2009} because our method is essentially a classifier which requires at least one base classifier.       


\begin{table*}[!t]
 \centering
 \caption{(a) The datasets  (ordered by the size) and their properties: number of objects, number of classes, number of features, probability of the majority class (MAJ), and entropy of the class probability distribution (ENT). Among the binary and multi-class datasets, Creditcard and Statlog are the most imbalanced respectively since MAJ value is high and ENT value is low for them. We therefore use these two datasets in Section \ref{class_imbalance} to show how the competing methods perform on imbalanced datasets. (b) Best parameter setting of \namei~for all the datasets (see Section \ref{sec:parameter}).}\label{table:dataset}
 \scalebox{0.8}{
 \begin{tabular}{|c| l |c| c |c |c |c||cccc|}
 \hline
 \multicolumn{7}{|c||}{Dataset description} & \multicolumn{4}{|c|}{Base parameter values}\\\hline
  & Dataset & \# instances & \# classes & \# features & MAJ & ENT & $\alpha$ & $\beta$ & $\gamma$ & $\delta$ \\\hline
  
\multirow{7}{0.05\columnwidth}{\rotatebox[origin=c]{90}{Binary}} 
&   Titanic \cite{titanic} & 2200 & 2 & 3 & 0.68 & 0.90 & 0.20 & 035 & 0.45 & 0 \\
 & Spambase \cite{Lichman:2013} & 4597 & 2 & 57 & 0.61 & 0.96 & 0.20 & 0.40 & 0.30 & 0.10 \\
 & Magic \cite{Lichman:2013} & 19020 & 2 & 11 & 0.64 & 0.93 & 0.15 & 0.35 & 0.40 & 0.10 \\
 & Creditcard \cite{Yeh:2009:} & 30000 & 2 & 24 & 0.78 &0.76 &0.20 & 0.40 & 0.35 & 0.05  \\

 & Adults \cite{Lichman:2013} & 45000 & 2 & 15 & 0.75 & 0.80 & 0.25 & 0.35 & 0.35 & 0.05 \\
 
 & Diabetes \cite{Lichman:2013} & 100000 & 2 & 55 & 0.54 & 0.99 & 0.15 & 0.45 & 0.35 & 0.05  \\
 
 & Susy \cite{susy} & 5000000 & 2 & 18 & 0.52 & 0.99 &0.20 & 0.30 & 0.45 & 0.05 \\\hline				
 
 \multirow{6}{0.05\columnwidth}{\rotatebox[origin=c]{90}{Multi-class}} 
 & Iris \cite{Lichman:2013} & 150 & 3 & 4 & 0.33 & 1.58 & 0.25 & 0.35 & 0.40 & 0 \\
 
 & Image \cite{Lichman:2013} & 2310 & 7 & 19 & 0.14 & 2.78 & 0.15 & 0.45 & 0.30 & 0.10  \\
 
 & Waveform \cite{Lichman:2013} & 5000 & 3 & 21 & 0.24 & 2.48 & 0.20 & 0.35 & 0.40 & 0.05  \\
 
 & Statlog \cite{Lichman:2013} & 6435 &6 & 36 & 0.34 & 1.48 & 0.20 & 0.35 & 0.35 & 0.10 \\
 
 &Letter \cite{Lichman:2013} & 20000 & 26 & 16 & 0.04 & 4.69 & 0.25 & 0.30 & 0.40 & 0.05\\		
 &Sensor \cite{Lichman:2013} & 58509 & 11 & 49 & 0.09 & 3.45 & 0.20 & 0.35 & 0.40 & 0.05 \\\hline
 \end{tabular} }
\end{table*}

{\bf Time Complexity:} 
For each group, the time to update $\F^g_{j.}$ according to Equation \ref{update1} is $\mathcal{O}(Nl)$, totaling $\mathcal{O}(GNl)$. Similarly, according to Equation \ref{update2} updating $\F^o_{i.}$ takes $\mathcal{O}(GNl)$. Usually, a coordinate descent method takes linear time to converge \cite{Tseng2001}. Overall, the time complexity of \name~is $\mathcal{O}(GNl)=\mathcal{O}(MNl^2)$, where $M=(C_1+C_2)$, total number of base models. The complexity is therefore  linear in the number of base models and the number of objects, and quadratic in the number of classes. We will show it empirically in Section \ref{runtime}.

\begin{algorithm2e}[!t]
\DontPrintSemicolon
\SetAlgoNoEnd
\hrule
\KwIn{$\K^m$, $\K^c$, $\Y^o$, $\Y^g$, parameters: $\alpha,\beta,\gamma,\delta$, threshold: $\epsilon$}
\KwOut{$\F^o$}
\hrule 
Initialize $\F^o$ and $\F^g$ randomly such that the constraints in Equation \ref{min2} are preserved\\
Set $t=1$\\
\While{$||\F^{o(t)}- \F^{o(t-1)}||_F>\epsilon$}{
$t = t+1$\\
$\F^{g(t)}=(2\delta \mathbb{1} + \alpha\D^m)^{-1}(\alpha \K^m \F^{o(t-1)} + 2\delta \Y^g)$\label{step3}\\
$\F^{o(t)}= (\alpha \D^m+2\beta D^c- \beta I \K^c -\beta \mathbb{1}\K^c+2\gamma \mathbb{1})^{-1}(\alpha\K^m\F^{g(t)}+2\gamma \Y^o)\label{step4}  $\\
}
\Return $\F^o$;
 \hrule
\caption{\name~ Algorithm (here $\mathbb{1}$ is an all-ones matrix, $I$ is an identify matrix, $||.||_F$ is the  Frobenius norm, $D^m=diag\{(\sum_{j=1}^N \K_{ij}^m)\}|_{N\times G}$ and $D^c=diag\{(\sum_{j=1}^N \K_{ij}^c)\}|_{N\times N}$).   } \label{alg:ec3}
\end{algorithm2e}

\if 0
\begin{table}[!t]
\centering
\caption{Best parameter setting of \namei~for all the datasets.}\label{best_parameter}
\begin{tabular}{|l|cccc|}
\hline
Dataset & $\alpha$ & $\beta$ & $\gamma$ & $\delta$\\\hline
Titanic & 0.20 & 0.35 & 0.45 & 0\\
Spambase & 0.20 & 0.40 & 0.30 & 0.10\\
Magic & 0.15 & 0.35 & 0.40 & 0.10\\
Credicard & 0.20 & 0.40 & 0.35 & 0.05\\
Adult & 0.25 & 0.35 & 0.35 & 0.05\\
Diabetes & 0.15 & 0.45 & 0.35 & 0.05\\
Susy & 0.20 & 0.30 & 0.45 & 0.05\\
Iris & 0.25 & 0.35 & 0.40 & 0\\
Image & 0.15 & 0.45 & 0.30 & 0.10\\
Waveform & 0.20 & 0.35 & 0.40 & 0.05\\
Statlog & 0.20 & 0.35 & 0.35 & 0.10\\
Letter & 0.25 & 0.30 & 0.40 & 0.05\\
Sensor & 0.20 & 0.35 & 0.40 & 0.05\\\hline
\end{tabular}
\end{table}
\fi

\if{0}
\begin{table*}[!t]
 \centering
 \caption{Comparative analysis among the competing methods: (a) Rank (averaged over the ranks based on AUC and F-Score) of all the competing methods across different datasets. Top method (blue) and best baseline method   (red) are marked. (b) We also show the raw accuracy of \namei~(the best method) and the best baseline method (which varies across datasets). For few datasets UPE is not the best baseline; we explicitly report its performance below the table. (c) Runtime (in seconds) of the ensemble methods that consider both classification and clustering (we do not consider the time to run the base methods).}\label{tab:result}
 \vspace{-5mm}
 \scalebox{0.67}{
 \begin{tabular}{|c| c| c| c| c| c| c| c|| c| c| c| c||c| c|| c| c| c |c| c| c|| c| c|c|r|r|r|}
 
 \multicolumn{16}{c}{(a)} & \multicolumn{1}{c}{} & \multicolumn{5}{c}{(b)} & \multicolumn{1}{c}{} & \multicolumn{3}{c}{(c)} \\ 
 \cline{1-16}\cline{18-22}\cline{24-26}
  
  & \multirow{2}{*}{Dataset} & \multicolumn{6}{c||}{Standalone Classifier} & \multicolumn{4}{c||}{Ensemble Classifier} & \multicolumn{2}{c||}{Clust. + class.} & \multicolumn{2}{c|}{Our}& & \multicolumn{3}{c||}{Best Baseline} & \multicolumn{2}{c|}{Our (\namei)} & &  \multicolumn{3}{c|}{Runtime (Seconds)} \\\cline{3-16} \cline{18-22}\cline{24-26}
  &  & DT & NB & K-NN & LR & SVM &  SGD & STA & BAG & BOO & RF & BGCM & UPE & \name & \namei &  & AUC & F-Sc & Name & AUC & F-Sc  & & BGCM & UPE & \namei \\\cline{1-16}	\cline{18-22}\cline{24-26}
  
\multirow{7}{0.05\columnwidth}{\rotatebox[origin=c]{90}{Binary}} 
&   Titanic & 8.5 & 10 & 5.5 & 8 & 7.5 & 8.5 & 14 & 8.5 & 4.5 & 8.5 & 4 & {\color{red}3.5} & 2 & {\color {blue}1} &  & 0.66  & 0.51 & UPE & 0.68 & 0.54 & & {\color {red} 27} &	31	& {\color {blue}23}
  \\ 
 & Spambase & 8 & 14 & 10.5 & 7 & 9 & 12 & 13 & 4 & 4.5 & 10.5 & 4.5 & {\color{red}3.5} & 2 & {\color {blue}1} & &0.93 & 0.92 & UPE & 0.95 & 0.94 & & {\color {red}68} & 73 & {\color {blue}67}  \\ 
 & Magic & 7 & 14 & 9 & 11 & 10 & 12 & 13 & 3.5 & 5 & 8 & 5.5 & {\color{red}3.5} & 2 & {\color {blue}1} &  & 0.55 & 0.82 & UPE & 0.58 & 0.84 & & {\color {red}510} & 621 & {\color {blue}436}  \\ 
 
 & Creditcard & 3.5 &13 & 8 & 11 & 9 & 12 & 14 & 6.5 & 5 & 10 & 6.5 & {\color{red}3} & 2 & {\color {blue}1} &  &  0.66 & 0.51 & UPE & 0.73 & 0.53 & & {\color {red}1786} & 1803 & {\color {blue}1654}\\ 
 
 &Adults & 11 & 14 & 13 & 9 & 8 & 10 & 5 & 6.5 & 3.5 & 12 & 6 & {\color{red}3} & 2 & {\color {blue}1}  &  & 0.79 & 0.69 & UPE & 0.83 & 0.71 & & {\color {red}2440} & 2519 & {\color {blue}2410}   \\\ 
 
 & Diabetes & 10 & 11 & 12 & 8.5 & 10 & 11 & 13 & 6 & 5 &7.5 &4 &{\color{red}3.5} &2 & {\color {blue}1} & & 0.65 & 0.62 &  UPE & 0.68 & 0.65 & & {\color {red}5672} & 5720 & {\color {blue}5478}   \\ 
 
 & Susy & 10.5 & 13 &13.5 &7.5 &7.5 &8 &11 &7 &{\color{red}4} &8.5 & 5.5 & 5.5 & 2 & {\color {blue}1} & & 0.77 &0.73 & BOO$^{*}$& 0.78 & 0.76  & & {\color {red}28109} & 28721 & {\color {blue}27621}\\\cline{1-16}		\cline{18-22}\cline{24-26}	
 
 \multirow{6}{0.05\columnwidth}{\rotatebox[origin=c]{90}{Multi-class}} 
 & Iris & 5 & 8.5 & 5 & 10.5 &10 & 5 & 14 & 10.5 & 9.5 &5 &4 & {\color{red}3.5} & 2.5 & {\color {blue}1} & & 0.97 & 0.96 & UPE & 0.98 & 0.98 & & {\color {red}20} & 21 & {\color {blue}14}  \\

 & Image & 7 & 14 & 7.5 &12 & 11 & 10  & 11 & 9.5 & 6.5 & 5 & 4 & {\color{red}3.5} & 2.5 & {\color {blue}1} & & 0.95 & 0.96 & UPE & 0.99 & 0.99 & & {\color {red}70} & 74 & {\color {blue}68}  \\
 
 & Waveform & 12.5 & 10.5 & 8 & 7 & 6 & 6 & 11.5 & 8.5 & 8.5 & 13.5 & 6 & {\color{red}3.5} & 2 & {\color {blue}1} & & 0.89 & 0.85 & UPE & 0.93 & 0.89 & & {\color {red}198} & 208 & {\color {blue}178} \\
  
 & Statlog & 11.5 & 12.5 & 7 & 11 & 9.5 & 7.5 & 8.5 & 5.5 & 9 & 7 & 10 & {\color{ red}3} & 2 & {\color {blue}1.5} & & 0.92 & 0.81 & UPE & 0.94 & 0.91 && {\color {red}345} & 367 & {\color {blue}248} \\
 
 &Letter & 9.5 & 8 & 10 & 9.5 & 11.5 & 9.5 & 12.5 & 5.5 & 6.5 & 5.5 & {\color{red}5} & 7.5 & 2 & {\color {blue}1}  & & 0.50 & 0.03 & BGCM$^{\diamond}$&  0.53 & 0.06 && {\color {red}1423} & 1567 & {\color {blue}1098} \\		
 &Sensor & 6.5 & 13.5 & 7.5 & 12 & 10.5 & 9 & 13 & 7.5 & 5.5 & 5.5 & {\color{red}3.5} & 4 & {\color {blue}1.5} & {\color {blue}1.5} &  &  0.98 & 0.98 & BGCM$^\#$ & 0.99  & 0.99 & &{\color {red}7992} &8092 & {\color {blue}7934} \\\cline{1-16}\cline{18-22}\cline{24-26}		
\multicolumn{2}{|c|}{Average} & 8.92 & 11.26 & 8.96 & 9.69 & 9.42& 9.19 & 11.80 & 6.65 & 5.92 & 8.15 & 5.26 & {\color{red}4.07} & 2.03 & {\color {blue}1.07} & & -- & -- & -- & 0.81 & 0.75 & & -- & -- & -- \\\cline{1-16}\cline{18-22} \cline{24-26}	  
 \multicolumn{16}{c}{} & \multicolumn{1}{c}{} & \multicolumn{9}{c}{Accuracy of UPE (AUC,F-Sc): $^{*}(0.74,0.72)$; $^{\diamond}(0.49,0.03)$;$^{\#}(0.97,0.97)$} \\ 

 \end{tabular} }
 \vspace{-5mm}
\end{table*}
\fi

\section{Experimental Setup}\label{setup}
In this section, we briefly explain the experimental setup -- datasets used in our experiments, set of base classifiers and base clustering methods whose outputs are combined, and set of baseline methods with which we compare our method.

 \textbf{{Datasets:}}
We perform our experiments on a collection of 13 datasets, most of which are taken from the standard UCI machine learning repository  \cite{Lichman:2013}. These datasets are used widely and highly diverse in nature in terms of the size, number of features and the distribution of objects in different classes.
A summary of these datasets is shown in Table \ref{table:dataset}. In each iteration, we randomly divide each dataset into three segments -- 60\%  for training, 20\% for parameter selection (validation), and  20\%  for testing. 
We use this division to train our base classifiers. However, base clustering methods are run on the entire dataset (combining training, validation and testing). 
The outputs of the base classifiers and base clustering methods on only the test dataset are fed into our method.  
We report the accuracy in terms of AUC (Area under the ROC curve) and F-Score for each dataset by averaging the results over $20$ such iterations. 
The predictive results of base methods on the test set are provided to our methods.

 \textbf{ {Base Classifiers:}}
In this study, we use seven (standalone) base classifiers: (i) DT: CART algorithm for decision tree with Gini coefficient \cite{Quinlan:1986}, (ii) NB: Naive Bayes algorithm with kernel density estimator \cite{Webb2010}, (iii) K-NN: K-nearest neighbor algorithm \cite{citeulike:5847607}, (iv)  LR: multinomial logistic regression \cite{Krishnapuram}, (v)  SVM: Support Vector Machine with linear kernel \cite{Steinwart:2008}, (vi) SGD:  stochastic gradient descent classifier \cite{Bottou2010} and (vii) Convolutional Neural Networks (CNN)\footnote{https://github.com/fastai/courses} \cite{Krizhevsky:2012}.
We utilize standard grid search for hyper-parameter optimization. These algorithms are further used later as standalone baseline classifiers to compare with our ensemble methods.

 \textbf{ {Base Clustering Methods:}} We consider five state-of-the-art clustering methods: DBSCAN \cite{ester1996}, Hierarchical (with complete linkage and Euclidean distance) \cite{Rokach2005}, Affinity \cite{Frey07clusteringby}, K-Means \cite{kmeans} and MeanShift \cite{Fukunaga:2006}.  The value of $K$ in K-Means clustering is determined by  the Silhouette Method \cite{Rousseeuw:1987}. Other parameters of the methods are systematically tuned to get the best performance.

  \textbf{ {Baseline Classifiers:}}
We compare our methods with $7$ standalone classifiers mentioned earlier. We additionally compare them with $5$ state-of-the-art ensemble classifiers: (i) Linear Stacking (STA):  stacking with multi-response linear regression \cite{Reid2009}, 
(ii) Bagging (BAG): bootstrap aggregation method \cite{bagging}, (iii) AdaBoost (BOO): Adaptive Boosting \cite{Schapire:1999}, (iv) XGBoost (XGB): a tree boosting method \cite{Chen:2016:XST}, and (v)  Random Forest (RF): random forest  with Gini coefficient \cite{Breiman:2001}. Moreover, we compare our methods with both BGCM \cite{NIPS2009_3855} and UPE \cite{Ao:2014}, two recently proposed  consensus maximization approaches that combine both classifiers and clustering methods. 
Thus, in all, we compare our method with 14 classifiers including sophisticated ensembles.


\begin{table*}[!ht]
 \centering
 \caption{Accuracy of the competing methods in terms of (a) AUC and (b) F-Score. Top three results per dataset are in boldface. We observe that there is  no  particular  baseline  which  is  the  best  across  all  datasets. UPE, BGCM and BOO stand as best baselines depending upon the datasets. However,  \namei~is a single algorithm that achieves the best performance irrespective of any dataset. }\label{tab:result}
\vspace{-5mm}
 \scalebox{0.85}{
 \begin{tabular}{|c| c| c| c| c| c| c|c| c|| c|c| c| c| c||c| c|| c| c| }

 \multicolumn{18}{c}{(a)}   \\\cline{1-18}
  
  & \multirow{2}{*}{Dataset} & \multicolumn{7}{c||}{Standalone Classifier} & \multicolumn{5}{c||}{Ensemble Classifier} & \multicolumn{2}{c||}{Clust. + class.} & \multicolumn{2}{c|}{Our} \\\cline{3-18}

 &  & DT & NB & K-NN & LR & SVM &  SGD & CNN & STA & BAG & BOO & XGB  &RF  & BGCM & UPE & \name & \namei   \\\cline{1-18}	
  
\multirow{7}{0.05\columnwidth}{\rotatebox[origin=c]{90}{Binary}} 
&   Titanic & 0.655 & 0.659 & 0.667 & 0.664 & 0.664 & 0.664 & 0.674&0.500 & 0.655 & 0.664 & 0.659&0.655 & 0.664 & {\bf 0.665}  & {\bf 0.677} & {\bf 0.687}   \\ 
 & Spambase & 0.909 & 0.850 & 0.872 & 0.914 & 0.903 & 0.867 & 0.916&0.864 & 0.931  &0.933&0.930 & 0.897 & 0.931 & {\bf 0.937} & {\bf 0.952} & {\bf 0.954}  \\ 
 & Magic & 0.519 & 0.420 & 0.503 & 0.477 & 0.479 & 0.468 & 0.470&0.440 & 0.555 & 0.553 & 0.543&0.518 & 0.553 & {\bf 0.554} & {\bf 0.560} & {\bf 0.587} \\ 
 & Creditcard & 0.644 & 0.612 & 0.635 & 0.621 & 0.626 & 0.621 &0.623 &0.611 & 0.661 & 0.641 & 0.642&0.631 & 0.656 & {\bf 0.666} & {\bf 0.702} & {\bf 0.732} \\ 
 
 &Adults & 0.743 & 0.78 & 0.737 & 0.766 & 0.767 & 0.751 &0.772 &0.785 & 0.778 & 0.793 & 0.783&0.742 & 0.786 & {\bf 0.793} & {\bf 0.804} & {\bf 0.834}  \\\ 
 
 & Diabetes & 0.573 & 0.505 & 0.566 & 0.614 & 0.614 & 0.612 & 0.603&0.572 & 0.643  & 0.648&0.634 & 0.574 & 0.643 & {\bf 0.653} & {\bf 0.676} & {\bf 0.687}   \\ 
 
 & Susy & 0.690 & 0.699 & 0.664 &0.721 & 0.762 & 0.734 &0.741 &0.731 & 0.766 & {\bf 0.772} & 0.770&0.701 & 0.759 & 0.746 & {\bf 0.774} & {\bf 0.786} \\\cline{1-18}		
 
 \multirow{6}{0.05\columnwidth}{\rotatebox[origin=c]{90}{Multi-class}} 
 & Iris & 0.950 & 0.950 & 0.950 & 0.925 & 0.925 & 0.932 &0.941 & 0.675 & 0.925 & 0.912 & 0.910&0.95 & 0.932 & {\bf 0.975} & {\bf 0.986} & {\bf 0.989}  \\

 & Image & 0.929 & 0.873 & 0.921 & 0.948 & 0.912 & 0.904 &0.901 & 0.903 & 0.906 & 0.918 & 0.920&0.912 & 0.931 & {\bf 0.951} & {\bf 0.991} & {\bf 0.994} \\
 
 & Waveform & 0.831 & 0.858 & 0.864 & 0.903 & 0.903 & 0.898 & 0.902&0.847 & 0.897 & 0.903 & 0.892&0.828 & 0.892 & {\bf 0.903} & {\bf 0.921} & {\bf 0.931} \\
 
 & Statlog & 0.897 & 0.879 & 0.917 & 0.892 & 
0.886 & 0.901 & 0.901&0.898 & 0.910 & 0.914& 0.904 & 0.896 & 
0.821 & {\bf 0.921} & {\bf 0.958} & {\bf 0.943}\\
 
 &Letter &  0.499 & 0.500 & 0.500 & 0.499 & 0.499 & 
0.499 &0.499 &0.499 & 0.500 & 0.501 & 0.501&0.500 & 
{\bf 0.502} & 0.491 & {\bf 0.531} & {\bf 0.531}
 \\		
 &Sensor & 0.980 & 0.846 & 0.975 & 0.862 & 0.846 & 0.915 & 0.972&0.753 & 0.977 & 0.934 &0.971 &0.971 & {\bf 0.980} & 0.971 & {\bf 0.995} & {\bf 0.996}  \\\cline{1-18}		
 
\multicolumn{2}{|c|}{Average} & 0.754 & 0.730 & 0.753 & 0.757 & 0.751 & 0.751 & 0.762&0.698   &0.778 & 0.777 &0.773 &0.752  &0.774 & {\bf 0.786} & {\bf 0.808} & {\bf 0.819}  \\\hline

\multicolumn{18}{c}{}   \\

\multicolumn{18}{c}{(b)}   \\\cline{1-18}

 & \multirow{2}{*}{Dataset} & \multicolumn{7}{c||}{Standalone Classifier} & \multicolumn{5}{c||}{Ensemble Classifier} & \multicolumn{2}{c||}{Clust. + class.} & \multicolumn{2}{c|}{Our} \\\cline{3-18} 
 &  & DT & NB & K-NN & LR & SVM &  SGD & CNN & STA & BAG & BOO & XGB & RF & BGCM & UPE & \name & \namei   \\\cline{1-18}	
  
\multirow{7}{0.05\columnwidth}{\rotatebox[origin=c]{90}{Binary}} 
&   Titanic & 0.476 & 0.461 & 0.461  & 0.421 & 0.446 & 
0.416 & 0.476&0.054 & 0.476 & 0.491 & 0.483&0.476 & 0.501 & {\bf 0.513} & 
{\bf 0.528} & {\bf 0.541}\\ 

 & Spambase & 0.891 & 0.813 & 0.879 & 0.898 & 0.885 & 0.844 &0.882 &0.840 & 0.913 & 0.911 & 0.901&0.877 & 0.912 & {\bf 0.923} & 
{\bf 0.941} & {\bf 0.944}  \\ 

 & Magic & 0.741 & 0.495 & 0.710 & 0.653 & 0.659 & 
0.632 & 0.751&0.543 & 0.820 & 0.817 &0.812 &0.740 & 0.812 & {\bf 0.821} &  
{\bf 0.830} & {\bf 0.842} \\ 

 & Creditcard & 0.499 & 0.399 & 0.481 & 0.372 & 0.432 &
0.395 & 0.446&0.371 & 0.491 & 0.495 & 0.489& 0.426 & 0.460 & {\bf0.509} &  
{\bf 0.529} & {\bf 0.531}  \\ 
 
 &Adults & 0.614 & 0.553& 0.611 & 0.665 & 0.665 & 0.639 &0.652 &0.689 & 0.683 & 0.689 & 0.681&0.612 & 0.689 & {\bf 0.691} & {\bf0.701} & {\bf0.716} \\\ 
 
 & Diabetes & 0.540 & 0.567 & 0.526 & 0.529 & 0.525 & 0.525 & 0.515&0.337 & 0.601 & 0.605 &0.609 &0.610 & 0.611  & {\bf 0.621} &
{\bf0.641} & {\bf0.651 }
    \\ 
 
 & Susy & 0.672 & 0.537 & 0.606 & 0.667 & 0.672 & 0.676 & 0.671&
0.650 & 0.672 & {\bf0.731} & 0.721&0.682 & 0.724 & 0.721 & {\bf0.756} & {\bf0.763} 
  \\\cline{1-18}	
  
 \multirow{6}{0.05\columnwidth}{\rotatebox[origin=c]{90}{Multi-class}} 
 
 & Iris & 0.932 & 0.667 & 0.932 & 0.897 & 0.899 & 0.883 & 0.910&
0.535 & 0.897 & 0.932 & 0.912&0.932 & 0.946 & {\bf0.961} & {\bf0.987 }& {\bf0.988}\\

 & Image & 0.965 & 0.763 & 0.963 & 0.914 &  0.919 & 
0.921 &0.955 &0.926 & 0.909 & 0.962 &0.951 &0.960 & 0.960 & {\bf 0.967} &  
{\bf0.978} & {\bf0.991} 
  \\
 
 & Waveform & 0.774 & 0.800 & 0.818 & 0.812 & 0.813 &
0.831 &0.791 &0.792 & 0.811 & 0.761&0.772 &0.770 & 0.850 & {\bf0.856}& 
{\bf0.884} & {\bf0.896}\\
 
 & Statlog & 0.661 & 0.671 & 0.718 & 0.681 & 0.789  & 0.781 &0.704 &0.771 & 0.802 & 0.664 &0.792 &0.802  & 0.801 & {\bf 0.810}  & 
{\bf 0.891}  & {\bf 0.912} \\
 
 &Letter & 0.030 & 0.032 & 0.030  & 0.030 & 0.032  & 0.030 & 0.031&0.006 & 0.038  &0.028& 0.031& 0.030 & {\bf 0.033} &  0.031 & {\bf 0.0505}  & {\bf 0.067} 
   \\		
 &Sensor & 0.964 &0.647 &0.955 &0.748 &0.846 &0.915 &0.942 &0.753 &0.761 &0.771& 0.762&0.965 &{\bf 0.981} &0.970 & {\bf 0.997} & {\bf 0.993} 
  \\\cline{1-18}		
\multicolumn{2}{|c|}{Average} & 0.669&0.577&0.666&0.638&0.656&0.662&0.673 &0.559&0.685&0.683& 0.681&0.686&0.717&{\bf 0.723}&{\bf 0.747}&{\bf 0.757}
 \\\hline
 \end{tabular} }
\end{table*}

\section{Experimental Results}\label{result}
In this section, we present experimental results in details. We start by defining our parameter selection method, followed by comparative analysis with the baselines. We then present a detailed understanding of our method -- i.e.,  how it depends on the base methods, how it handles imbalanced data, how robust it is to random noise injected into the base solutions, how each of the components in the objective function affects the performance of the model, and how  its runtime depends on various parameters of the datasets (such as number of objects, classes and base methods). 

\subsection{Parameter Selection}\label{sec:parameter}
Our proposed methods depend on the values of $\alpha, \beta, \gamma, \delta$. 
Therefore, appropriate parameter selection would lead to better accuracy. 
Here, we conduct an exhaustive experiment to understand the appropriate values of the parameters used in our methods as follows. For each dataset, we vary  the value of each parameter between $0$ and $1$ with an increment of $0.05$. We then choose only those values of the parameters for which the accuracy of our methods in terms of AUC falls in the top 10 percentile of the entire accuracy range. Figure \ref{fig:parameter} shows the fraction of selected values for parameters of \namei~falling in certain ranges for all the datasets\footnote{The pattern is same for \name.}. We observe that $\beta$ and $\gamma$ always get  higher values, followed by $\alpha$ and $\delta$. We therefore conclude that the components that follow both co-occurrence and consensus principles (mentioned in Section \ref{method}) are the most effective components of our objective function (Equation \ref{min2}). However, the other two parameters $\alpha$ and $\delta$ are also important. Therefore, we suggest the following ranges for the parameters: $0.10\leq \alpha \leq 0.40$, $0.30 \leq \beta, \gamma \leq 0.60$,  and $0 \leq \delta \leq 0.20$. In the rest of the paper, we report the results with the following parameter setting for both \name~and \namei: $\alpha=0.25$, $\beta=0.35$, $\gamma=0.35$ and $\delta=0.05$ (See Table \ref{table:dataset} for the best parameter setting of \namei~for individual datasets).

\begin{figure}[!t]
 \centering
 \includegraphics[width=\columnwidth]{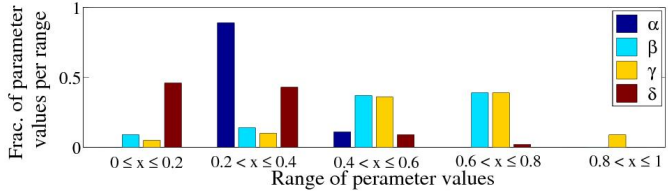}
 \vspace{-5mm}
 \caption{(Color online) Fraction of values of each parameter within a certain range across all datasets. For each dataset, we consider only those parameter values for which AUC of \namei~lies in the top 10 percentile of the entire accuracy range. 
 }\label{fig:parameter}
 \end{figure}

\begin{figure}[!h]
\vspace{-2mm}
\centering
 \includegraphics[width=\columnwidth]{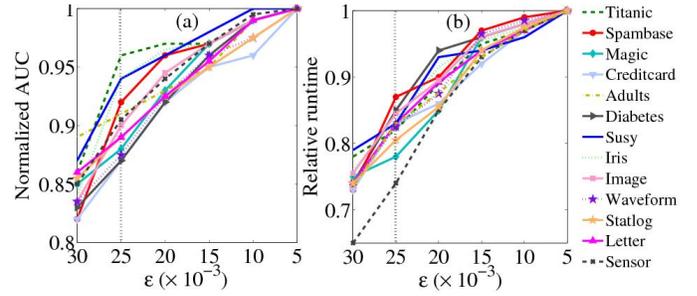}
 \vspace{-7mm}
 \caption{(Color online) (a) Normalized AUC, and (b) relative runtime of \namei~with the decrease of $\epsilon$. }\label{fig:epsilon}
 \end{figure}

Another parameter, $\epsilon$ controls the convergence of \name~ -- the higher the value of $\epsilon$, the faster the convergence of \name; however we may sacrifice the performance. To understand the trade-off between performance and runtime, we decrease the value of $\epsilon$ from $0.030$ to $0.005$ (with the decrement of $0.005$) and measure the accuracy and runtime. Figure \ref{fig:epsilon} shows that on average, considering $\epsilon=0.025$, \namei~can obtain 90\% of the maximum accuracy (with $\epsilon=0.005$) and $82\%$ of the maximum runtime (with $\epsilon=0.005$); whereas with  $\epsilon=0.020$ ({\em resp.}  $\epsilon=0.030$) the average accuracy would be $93\%$ ({\em resp.} $0.84\%$), and the average runtime would be $89\%$ ({\em resp.} $74\%$) of the maximum accuracy and runtime respectively. Therefore, rest of the results are reported with $\epsilon=0.025$.

\subsection{Comparison with Baseline Classifiers}\label{compare}
We evaluate the performance of the competing methods using two metrics -- AUC and F-Score. The values of both the metrics range between $0$ and $1$; the higher the value, the higher the accuracy. 
Table \ref{tab:result} shows the accuracy of all the methods for different datasets in terms of AUC and F-Score. Overall, aggregating clustering and classification (e.g., BGCM, UPA, \name, \namei) always provides better accuracy compared to aggregating only classifiers (e.g., STA, BAG) or standalone classifiers (e.g., DT, SVM).
We observe that our proposed methods (\name~ and \namei)  always outperform others for all the datasets. 
In most cases, UPE turns out to be the best baseline, followed by BGCM. However, irrespective of the datasets, \namei~acheives average AUC of $0.82$ ({\em resp.} average F-Score of $0.76$), which is  4.2\% ({\em resp.} 4.7\%) higher than UPE. \namei~gains maximum improvement over UPE for the Creditcard dataset (10\% in terms of AUC), which is significant according to the $t$-test with 95\% confidence interval. Moreover, as the network size increases. the improvement of both \name~and \namei~compared to the best baseline also increases. However, both UPE and BGCM seem to be very competitive with an average AUC of $0.78$ and $0.77$ respectively. Further, we observe in Table \ref{tab:result} that 
there is  no  single  baseline method  which  is  the  best  across  all  datasets -- UPE, BGCM and BOO stand as best baselines depending upon the datasets. However,  \namei~is a single algorithm that achieves the best performance across all the datasets. {\em One may therefore choose \namei~as opposed to investing time settling on which classifier to choose in light of the fact that \namei~ is  on a par with any existing classifier irrespective of the datasets used.}


\subsection{Effect of Base Classifiers}\label{sec:base_class}
A crucial part of our methods is to select the appropriate base classifiers. Here we seek to answer the following question -- how is our method affected by the quality and the number of base classifiers?

{\bf Quality of Base Classifiers:} To understand which base classifier has the highest impact, we drop each base classifier in isolation and measure the performance of \namei. Table \ref{impact_classifier} shows the performance of \namei~on different datasets. For Creditcard, we observe maximum deterioration (12.3\% and 13.20\% drop in terms of AUC and F-Score respectively) when Decision Tree (DT)  is dropped, which is followed by  K-NN, CNN, LR, SVM, SDG and NB. Interestingly, this rank of base classifiers is highly correlated with the rank obtained based on their individual performance on the Creditcard dataset as shown in Table \ref{tab:result} (the rank is DT, K-NN, CNN, SVM, LR, SGD and NB). A similar pattern is observed for the Waveform dataset, where dropping SVM has the highest effect  on the accuracy of \namei~(7.52\% and  14.60\% drop in terms of AUC and F-Score respectively), followed by CNN, SGD, LR, DT, K-NN and NB; and this rank is highly correlated with their individual performance as well. This pattern is remarkably similar for the other datasets (see Table \ref{impact_classifier}).

{\bf Number of Base Classifiers:}
Further, to understand the optimal number of base classifiers that need to be added into the base set, we add each  classifier one at a time into the base set based on the impact of its quality on \namei~as reported in Table \ref{impact_classifier}. For example, for Creditcard, we add the classifiers by the following sequence -- DT, K-NN, CNN, LR, SVM, SDG, NB, and measure the accuracy of \namei. Table \ref{impact_classifier1} shows that the rate of increase of \namei's accuracy is quite significant ($t$-test with 95\% confidence interval) till the addition of $4$ classifiers out of $6$. However, strong classifies seem to be more useful to enhance the accuracy. 

From both these observations,  we conclude that {\em while selecting base methods, one should first consider strong standalone classifiers}. However, addition of a weak classifier to the base set never deteriorates performance as long as a sufficient number of strong classifiers are present for aggregation.

\begin{table*}[!t]
\centering
\caption{Impact of each base classifier on the accuracy of \namei. We drop each base classifier in isolation and measure AUC of \namei~on all the datasets. }\label{impact_classifier}
\scalebox{0.9}{
\begin{tabular}{ l l cc cccccc ccccc}
\hline
\multirow{2}{*}{No.} & Base  & Titanic & Spambase & Magic & Creditcard & Adults & Diabetes & Susy & Iris & Image & waveform & Statlog & Letter & Sensor \\
 & classifier &   &   &   &  \\\hline
(i) & All       &0.68&	0.95&	0.58&	0.73&	0.83&	0.68&	0.78&	0.98&	0.99&	0.93&	0.94 &	0.53&	0.99\\\hline
(ii) & (i) - DT & 0.64	&0.91&	0.49&	0.64&	0.8&	0.63&	0.75&	0.90&	0.92&	0.90 &	0.92&	0.53&	0.90  \\ 
(iii) & (i) - NB & 0.67&	0.94&	0.58	&0.73&	0.81&	0.67&	0.76&	0.94&	0.98&	0.91&	0.93&	0.43&	0.98\\
(iv) & (i) - K-NN & 0.6 &	0.93&	0.52	&0.66	&	0.81	&	0.66&	0.77	&0.92	&0.93	&0.91&	0.82&	0.49	&0.91
 \\
(v) & (i) - LR & 0.62&	0.88	&0.55&	0.69	&0.78	&0.6&	0.66&	0.95&	0.97&	0.89	&0.89&	0.47&	0.96\\
(vi) & (i) - SVM & 0.63&	0.93&	0.53&	0.7&	0.75&	0.61&	0.69&	0.97	&0.97	&0.86	&0.87&	0.52	&0.93
\\
(vii) & (i) - SGD & 0.65&	0.94&	0.57&	0.7	&0.83&	0.63&	0.71&	0.91&	0.95&	0.87&	0.86&	0.5&	0.93\\
(vii) & (i)-CNN & 0.63 & 0.92 & 0.50 & 0.67 &0.82 & 0.62  & 0.67 & 0.91 & 0.93 & 0.87 &  0.93 & 0.54  & 0.91
\\
\hline
\end{tabular}}

\end{table*}

\if{0}
\begin{table}[t]
\centering
\caption{Impact of each base classifier on the accuracy of \namei. We drop each base classifier in isolation and measure the accuracy of \namei~on two representative datasets: Creditcard and Waveform. 
Lowest accuracy is marked in boldface.}\label{impact_classifier}
\vspace{-3mm}
\begin{tabular}{ l l cc cc }
\toprule
\multirow{2}{*}{No.} & Base  & \multicolumn{2}{c}{Creditcard} & \multicolumn{2}{c}{Waveform}\\\cline{3-6}
 & classifier & AUC & F-Sc & AUC & F-Sc\\\hline
(i) & All       &  0.73 & 0.53            & 0.93 & 0.89\\\hline
(ii) & (i) - DT & {\bf 0.64} & {\bf 0.46} & 0.90 & 0.83  \\ 
(iii) & (i) - NB & 0.73 & 0.54 & 0.91 & 0.88\\
(iv) & (i) - K-NN & 0.66 & 0.49 & 0.91 & 0.86 \\
(v) & (i) - LR & 0.69 & 0.50 & 0.89 & 0.81\\
(vi) & (i) - SVM & 0.70 & 0.51 & {\bf 0.86} & {\bf 0.76}\\
(vii) & (i) - SGD & 0.70 & 0.52 & 0.88 & 0.80\\
(viii) & (i) - CNN & 0.67 & 0.51 & 0.87 & 0.77\\
\bottomrule
\end{tabular}
\end{table}
\fi

\begin{table}[!t]
\centering
\caption{Impact of the number of base classifiers  on the performance of \namei. We add each classifier one at a time (based on the decreasing order of the impact on \namei~as reported in Table \ref{impact_classifier}) and measure the accuracy  of \namei~on two representative datasets: Creditcard and Waveform. 
} \label{impact_classifier1}
\scalebox{0.9}{
\begin{tabular}{ l l cc | ll cc }
\hline
\multirow{2}{*}{No.} & Base  & \multicolumn{2}{c|}{Creditcard} & \multirow{2}{*}{No.} & Base &  \multicolumn{2}{c}{Waveform}\\\cline{3-4} \cline{7-8}
 & Classifier & AUC & F-Sc & & Classifier & AUC & F-Sc\\\toprule
(i) & \namei+DT & 0.58 & 0.39 & (i) & \namei+SVM & 0.72& 0.68  \\
(ii) & (i)+K-NN & 0.64 & 0.46 & (ii) & (i)+CNN & 0.77 & 0.72 \\
(iii) & (ii)+CNN &  0.66& 0.47 &  (iii) & (ii)+SGD & 0.81&0.74 \\
(iv) & (iii)+LR & 0.68& 0.49& (iv) & (iii)+LR & 0.86&0.81 \\
(v) & (iv)+SVM & 0.71 & 0.52& (v) & (iv)+DT & 0.90 &0.85 \\
(vi) & (v)+SDG & 0.72 & 0.53& (vi) & (v)+K-NN & 0.91& 0.88\\
(vii) & (vi)+NB & 0.73 & 0.53& (vii) & (vi)+NB & 0.93 & 0.89 \\\bottomrule
\end{tabular}}
\end{table}

\subsection{Effect of Base Clustering Methods}\label{sec:base_cluster}
We are also interested to see the effect of base clustering methods on the performance of our method. We start by measuring the performance of individual base clustering methods. Since clustering does not provide actual class information, we consider this  as an unsupervised learning problem and group the objects in the test set based on the ground-truth class information.  We then check how well a clustering method captures the ground-truth based groups. The accuracy is reported in terms of Normalized Mutual Information  (NMI) \cite{Paninski:2003}. Table \ref{cluster_accuracy} shows that on average Affinity Clustering outperforms others, followed by Mean-Shift and DBSCAN. 

\begin{table}[!h]
\centering
\vspace{-3mm}
\caption{Accuracy of base clustering methods in terms of NMI for different datasets. The accuracy of top (second) ranked method is marked in blue (red).}\label{cluster_accuracy}
\vspace{-3mm}
\scalebox{0.85}{
\begin{tabular}{l| c c c c c}
\toprule
Dataset & DBSCAN & Hierarchical & Affinity & K-Means & Mean-Shift \\\hline
Titanic  & 0.38 & 0.29 & {\color{red}0.40} & 0.34 & {\color{blue}0.43} \\
Spambase & {\color{red}0.30} & 0.23 & {\color{blue}0.35} & 0.30 & 0.29\\
Magic    & {\color{red}0.31} & 0.24 & {\color{blue}0.33} & 0.21 & 0.28\\
Credicard & 0.36 & 0.22 & {\color{blue}0.41} & 0.29 & {\color{red}0.39}\\
Adults & 0.42 & 0.31 & {\color{blue}0.45} & 0.28 & {\color{red}0.33}\\
Diabetes & 0.30 & 0.25 & {\color{blue}0.36} & 0.24 & {\color{red}0.37}\\
Susy & 0.29 & 0.27 & {\color{blue}0.39} & 0.31 & {\color{red}0.33}\\\hline
Iris & 0.41 & 0.36 & {\color{red}0.47} & 0.24 & {\color{blue}0.48}\\
Image & 0.44 & 0.28 & {\color{blue}0.48} & 0.33 & {\color{red}0.45}\\
Waveform & 0.33 & 0.29 & {\color{blue}0.39} & 0.31 & {\color{red}0.37}\\
Statlog & {\color{red}0.49} & 0.31 & {\color{blue}0.51} & 0.35 & 0.45\\
Letter & 0.43 & 0.34 & {\color{blue}0.44} & 0.34 & {\color{red}0.38}\\
Sensor & 0.49 & 0.31 & {\color{red}0.50} & 0.28 & {\color{blue}0.53}\\\hline
Average & 0.38 & 0.29 & {\color{blue}0.42}  & 0.28 & {\color{red}0.39}\\
\bottomrule
\end{tabular}}
\end{table}

\begin{table}[!h]
\centering
\caption{Impact of the quality of base clustering methods on the performance of \namei. We drop each base clustering method in isolation and measure the accuracy  of \namei~on two representative datasets: Creditcard and Waveform. Results are identical for other datasets. Lowest accuracy is marked in boldface.} \label{impact_clustering}
\begin{tabular}{ l l cc cc }
\toprule
\multirow{2}{*}{No.} & Base  & \multicolumn{2}{c}{Creditcard} & \multicolumn{2}{c}{Waveform}\\\cline{3-6}
 & Clustering & AUC & F-Sc & AUC & F-Sc\\\hline
(i) & All       &  0.73 & 0.53            & 0.93 & 0.89\\\hline
(ii) & (i) - DBSCAN & 0.71 & 0.51 & 0.91 & 0.87 \\ 
(iii) & (i) - Hierarchical & 0.73 & 0.52 & 0.92 & 0.86 \\
(iv) & (i) - Affinity & {\bf 0.69} & {\bf 0.48} & {\bf 0.87} & {\bf 0.83} \\
(v) & (i) - K-Means & 0.72 & 0.53 & 0.92 & 0.88\\
(vi) & (i) - Mean-Shift & 0.71 & 0.50 & 0.90 & 0.85\\
\bottomrule
\end{tabular}
\vspace{-5mm}
\end{table}

{\bf Quality of Base Clustering Methods:}
To show how the quality of each base clustering method affects the performance of \namei, we perform a similar experiment to the one mentioned in Section \ref{sec:base_class} -- we drop each base clustering method in isolation and  report the accuracy of \namei~in Table \ref{impact_clustering}. Once again similar pattern is noticed -- Affinity Clustering which seems to be the best standalone clustering method for Credicard and Waveform (as shown in Table \ref{cluster_accuracy}), turns out to be the best base clustering method whose deletion leads to higher decrease of \namei's performance.  Interestingly, the decrease in performance of \namei~due to dropping the best base classifier (DT) is higher than the same due to the best base clustering method (Affinity) (see Tables \ref{impact_classifier} and \ref{impact_clustering} for comparison). This may indicate that {\em the effect of classifiers in our method is higher than that of a clustering method} -- this may be justifiable due to the fact that a strong standalone  base classifier itself is capable of producing significantly accurate result, and  we essentially leverage the solution of base classifiers to produce the final prediction.   

{\bf Number of Base Clustering Methods:} In order to understand how our method is affected by the number of base clustering methods, we run \namei~with one clustering method added at a time (based on the decreasing impact on \namei~as shown in Table \ref{impact_clustering}) in the base set and report the accuracy in Table \ref{impact_clustering1}. We observe that -- as oppose to the case for base classifiers (shown in Table \ref{impact_classifier1}), \namei~with only Affinity  and Mean-Shift clustering methods achieves almost 95\% of the accuracy obtained when all 5 base clustering methods are present. This result corroborates the conclusion drawn in \cite{NIPS2009_3855} that {\em a small number of strong base clustering methods might be enough to obtain a near optimal result}. However, once again, the performance of \namei~never deteriorates with the addition of new clustering methods into the set of base methods.

\begin{table}[!t]
\centering
\caption{Impact of the number of base clustering methods  on the performance of \namei. We add each clustering method one at a time (based on the decreasing order of the impact on \namei~ as reported in Table \ref{impact_clustering}) and measure the accuracy of \namei~on two representative datasets: Creditcard and Waveform. Results are identical for other datasets. } \label{impact_clustering1}
\vspace{-2mm}
\begin{tabular}{ l l cc cc }
\hline
\multirow{2}{*}{No.} & Base  & \multicolumn{2}{c}{Creditcard} & \multicolumn{2}{c}{Waveform}\\\cline{3-6}
 & Clustering & AUC & F-Sc & AUC & F-Sc\\\hline
(i) & \namei+Affinity & 0.66 & 0.47 & 0.84 & 0.80  \\
(ii) & (i)+Mean-Shift & 0.70 & 0.50 & 0.89 & 0.84 \\
(iii) & (ii)+DBSCAN & 0.71 & 0.50 & 0.91 & 0.86\\
(iv) & (iii)+K-Means & 0.73 & 0.52 & 0.92 & 0.87 \\
(v) & (iv)+Hierarchical &  0.73 & 0.53            & 0.93 & 0.89\\
\hline
\end{tabular}
\end{table}


\subsection{Importance of Individual Components of the Objective Function}
Our proposed objective function mentioned in Equation \ref{min2} is composed of four components. One might wonder how important each of these components are. In Section \ref{sec:parameter}, we already observed that $\beta$ and $\gamma$ always get higher weight than $\alpha$ and $\delta$, which indirectly implies that second and third components are important than the other two. We here conduct the following experiment to understand which factor contributes more to the objective function: We drop each component in isolation and modify the additive constraint mentioned in Equation \ref{min1}. For instance, when the fourth component is dropped, the constraint becomes $\frac{\alpha}{2}+\frac{\beta}{2}+\gamma=1$. Then we optimize the objective function and measure the accuracy on different datasets. Table \ref{table:component-wise} shows the percentage decrease in accuracy of \namei~by dropping each component in isolation with respect to the case when all the components are present. We observe that dropping of the second and third components effects the accuracy more compared to the other two. This result once again corroborates with Section \ref{sec:parameter}. Importantly, dropping of any component never increases the accuracy, which implies that all four components need to be considered in the objective function.

\begin{table}[!t]
 \centering
 \caption{Percentage decrease in accuracy (in terms of AUC) of \namei~after removing each of the four components in isolation. Similar pattern is observed for \name~and based on F-Score. Maximum drop is highlighted in bold.}\label{table:component-wise}
 \scalebox{0.8}{
 \begin{tabular}{|c| l |c| c |c |c |}
 \hline
  & Dataset & - 1st Comp. & - 2nd Comp & - 3rd Comp. & - 4th Comp  \\\hline
  
\multirow{7}{0.03\columnwidth}{\rotatebox[origin=c]{90}{Binary}} 
&   Titanic & 10.87 & 15.49 & {\bf  25.43} & 7.46  \\
 & Spambase & 12.01& {\bf 22.87} & 18.76 & 6.34\\
 & Magic & 12.34& 20.13 & {\bf 23.32} &  21.40   \\
 & Creditcard & 15.09 & {\bf 23.43} & 17.65 & 10.00 \\
 & Adults & 18.65 & {\bf 22.12} & 20.02 & 9.43 \\
 
 & Diabetes & 12.09 & {\bf 32.41} & 25.17 & 8.98   \\
 
 & Susy &  18.64 & 20.90 & {\bf 24.08} & 8.71  \\\hline				
 
 \multirow{6}{0.05\columnwidth}{\rotatebox[origin=c]{90}{Multi-class}} 
 & Iris &  12.34 & 18.98 & {\bf 23.32} & 5.43 \\
 
 & Image & 12.09 & {\bf 32.98} & 25.29 & 10.08  \\
 
 & Waveform & 18.76 & 24.43 & {\bf 28.87} & 4.56   \\
 
 & Statlog  & 16.33 & 20.08 & {\bf 21.28} & 8.34 \\
 
 &Letter   & 17.87 & 29.87 & {\bf 30.80} & 3.43\\		
 &Sensor  &  18.78 & 25.56 & {\bf 28.09} & 7.34 \\\hline
 \end{tabular} }
\end{table}

\subsection{Handling Class Imbalance}\label{class_imbalance}
As mentioned earlier, \namei~ is specially designed to handle  imbalanced datasets, which other ensemble methods and \name~might not handle well. Among binary and multi-class datasets, Creditcard and Statlog are the most imbalanced ones respectively (see the proportion of majority class MAJ in Table \ref{table:dataset}), and we have already observed in Table \ref{tab:result} that for both these datasets \namei~outpeforms other methods. However, it is not clear how well \namei~can handle even more imbalanced data. Hence we artificially generate imbalanced data from a given dataset as follows. For each dataset, we randomly select one class, and from that class we randomly remove $x\%$ of its constituent objects. The entire process is repeated 10 times for each value of $x$, and the average accuracy  is reported. We vary $x$ from $0\%-30\%$ (with the increment of $5\%$). We conduct this experiment on the largest binary dataset -- Susy, and the largest multi-class dataset -- Sensor because we want to make sure that the change in performance should not be  due to the lack of enough training samples (which might happen if we consider a small dataset), but solely due to the class imbalance problem.  

Figure \ref{fig:balance}(a) shows the average overall AUC (and standard deviation)  of the best baseline method (UPE) and our methods (\name~and \namei) for each value of $x$, i.e., a certain extent of class imbalance. We observe that for both the datasets, UPE is highly sensitive to class imbalance -- the rate of decrease in AUC is significantly higher ($t$-test with 95\% confidence interval) than both of our methods. However, \namei~ is even more effective than \name~ -- after $30\%$ injection of random class imbalance, it is able to retain 87\% and 89\% of its original AUC for Susy and Sensor  respectively. Further investigation on how accurately the competing methods are able to predict the objects of {\em only the manipulated class} reveals the same pattern (see Figure \ref{fig:balance}(b)) -- \namei~ outperforms others in capturing the rare class, followed by \name~ and UPE. With 30\% random class imbalance, \namei, \name~and UPE are able to retain $82\%$, $73\%$ and $67\%$ of its original F-Score for Susy, and $88\%$, $82\%$ and $77\%$ of its original F-Score for Sensor respectively. From these results, we may conclude that irrespective of the proportion of classes in a dataset, \namei~is always effective.

\begin{figure}[!t]
\centering
 \includegraphics[width=\columnwidth]{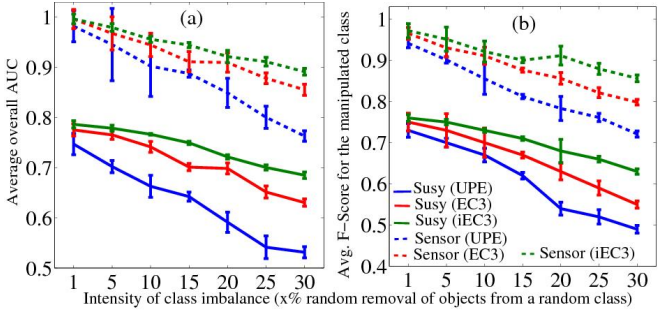}
 \vspace{-5mm}
 \caption{(Color online) Effect of class imbalance on the performance of best baseline method (UPE) and our methods (\name~and \namei) for Susy and Sensor. We randomly remove $x\%$ of objects from  a randomly selected class and measure (a) the  average overall AUC (and SD), and (b) average F-Score (and SD) corresponding to the manipulated class.   }\label{fig:balance}
 \end{figure}

\begin{figure}[!t]
 \centering
 \includegraphics[width=\columnwidth]{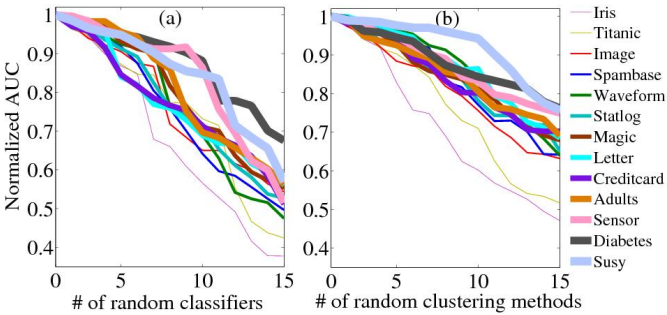}
 \caption{(Color online) Normalized AUC of \namei~after including  (a) $15$ random (a) classifiers and (b) clustering methods in the base set. The width of the line is correlated with the size of the corresponding dataset (more the width, larger the size of the dataset). The datasets are ordered in increasing order of size. }\label{fig:robustness}
\vspace{-5mm}
 \end{figure}

\begin{figure*}[!ht]
\centering
\includegraphics[width=0.9\textwidth]{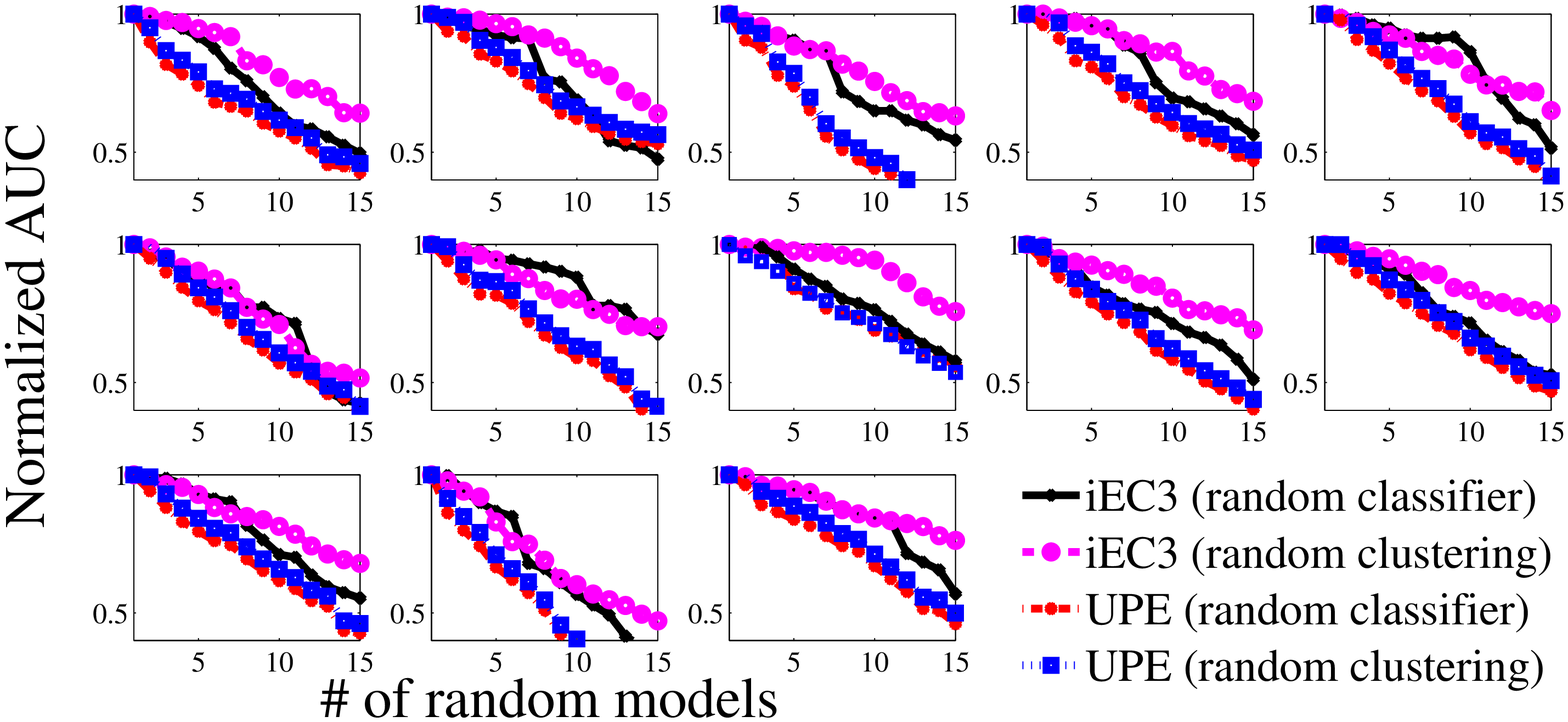}
\caption{Change in (normalized) AUC of \namei~and UPE (best baseline) after including 15 random classifiers and random clustering methods into the base set for 13 datasets (Left to right, top to bottom: Titanic, Spambase, Magic, Creditcard, Adults, Diabetes, Susy, Iris, Image, Waveform, Statlog, Letter, Sensor). The rate of decrease in AUC is high for UPE compared to \namei, which indicate that \namei~is more robust than UPE.}\label{random}
\end{figure*}

\subsection{Robustness Analysis}\label{robustness}
One might wonder how robust  our method is when random noise is injected into the base set. This might be important in an adversarial setting when attackers constantly try to manipulate the underlying framework to poison base solutions. To check the robustness of our methods, we add multiple randomly generated prediction/clustering into the base set and observe the resilience of our methods to  noise. Two types of random models are developed -- (i) each {\em random classifier} takes an object and randomly assigns a class (from the set of available classes for each dataset)  to it, (ii) each {\em random clustering method} selects a number $c$ between $[1,N]$ uniformly at random (where $N$ and $c$ are the number of objects and clusters respectively) and assigns each object into a cluster randomly with the guarantee that in the end no cluster will remain empty. Figure \ref{fig:robustness} shows the change in accuracy with increase of random base models. We observe that -- (i) \namei~retains at least 88\% of its original performance (noise-less scenario) with 10 random models  incorporated into it, whereas UPE keeps only 58\% of its original accuracy (see Figure \ref{random} for the comparison between \namei~and UPE); (ii) the effect of random classifiers is more detrimental than that of random clustering methods (for each dataset, the lowest value of its corresponding line over Y-axis is lower in Figure \ref{fig:robustness}(a) compared to that in    Figure \ref{fig:robustness}(b)); (iii) small datasets are quickly affected by the noise than large datasets. The first observation indicates that \namei~ is more robust to noise than UPE. The second observation might be explained by the fact that the outputs of the base classifiers are essentially used to determine the final class, whereas base clustering methods only provide an additional constraints. Therefore, noise at classification level harms the final performance more than that at  clustering level. The third observation leads to two conclusions -- first, \namei~ is more robust to large datasets than small datasets; second, to significantly reduce the prediction accuracy  of \namei~ for large datasets, one may really need to infect {\em a lot} of noise into the base set. However, UPE is less robust than \namei~-- the performance of UPE deteriorates even faster than \namei~(see Figure \ref{random}).

\begin{figure}[!h]
 \centering
 \includegraphics[width=\columnwidth]{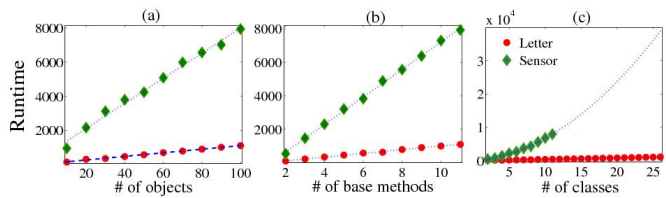}
 \vspace{-5mm}
 \caption{(Color online) Runtime (seconds) of \namei~ with the increase of the number of (a) objects, (b) base methods, and (c) classes for two largest multi-class datasets -- Letter and Sensor (plots are fitted with goodness of fit  $>0.98$ in terms of R-square).  }\label{fig:runtime1}
 \end{figure}

\begin{table}[!h]
 \centering
 \caption{(Color online) Runtime (in seconds) of the ensemble methods that consider both classification and clustering (we do not consider the time to run the base methods).}\label{tab:runtime}
 \vspace{-3mm}
 \scalebox{0.7}{
 \begin{tabular}{|l|r|r|r|r|r|r|r|} 
  

 \multicolumn{8}{c}{(a) Binary Dataset}\\\hline
 Method &Titanic & Spambase & Magic & Creditcard & Adults & Diabetes & Susy  \\\hline 
BGCM &  27 &68  &510 & 1786 & 2440  &5672 & 28109\\
UPE & 31 & 73 & 621 & 1803 & 2519 & 5720 & 28721\\
\namei & 23 & 67 & 436 & 1654 & 2410 & 5478 & 27621 \\\hline



 \multicolumn{8}{c}{(b) Multi-class Dataset}\\\cline{1-7}
 Method &Iris & Image & Waveform & Statlog & Letter & Sensor  \\\cline{1-7}
BGCM &  20 & 70 & 195 & 345 & 1423 & 7992\\
UPE & 21 & 74 & 208 & 367 & 1567 & 8092\\
\namei & 14 & 68 & 178 & 248 & 1098 & 7934\\\cline{1-7}


 \end{tabular} }

\end{table}

\subsection{Runtime Analysis}\label{runtime}
In Section \ref{algo}, we have mentioned that if we are given the base results a priori, the runtime of our method is linear in the number of objects and the number of base methods, and quadratic in the number of classes. Here we empirically verify our claims on two largest multi-class datasets -- Letter and Sensor, through the following three experiments. (i) we randomly select $10\%$ of total objects per dataset, incrementally add $10\%$ objects in each step and observe that the runtime of \namei~ increases linearly (Figure \ref{fig:runtime1}(a)). (ii) Given the entire dataset, we first add the results of one base classifier and one base clustering method, and then incrementally add remaining $10$ base methods ($6$ classifiers, followed by $4$ clustering methods mentioned in Section \ref{setup}), one per each step and observe that the runtime of \namei~increases linearly (Figure \ref{fig:runtime1}(b)). (iii) For each dataset, we randomly select $2$ classes and the corresponding objects in those classes, and incrementally add other classes one at a time in each step. Since the class-size is unequal, we repeat this experiment $10$ times in each step, and report the average runtime. Figure \ref{fig:runtime1}(c) shows that the runtime is quadratic with the number of classes. Moreover, Table \ref{tab:runtime} reports that  the runtime of \namei~is lowest compared to UPE and BGCM for all the datasets -- on average \namei~is 1.21 ({\em resp. 1.13}) times faster than UPE ({\em resp.} BGCM).

\section{Conclusion}\label{conclusion}
In this paper, we presented \name~and \namei~that take advantage of the complementary constraints provided by multiple classifiers and clustering methods to generate more consolidate results. We showed that the proposed objective function is a convex optimization function. Our theoretical foundation strengthens the utility of the proposed methods. We solved the optimization problem using block coordinate descent method. We further analyzed the optimality and the computational complexity of our method. 

\namei~outperfomed 14 other baselines on each of 13 different datasets, achieving at most $10\%$ higher accuracy than the best baseline. Moreover, it is more efficient than other baselines in terms of handling class imbalance, resilience to random noise and scalability. The issues related to algorithmic parameter selection and choice of appropriate base methods were also studied.

It is still not clear which set of base models we should choose to obtain near-optimal results. It might be possible to retain only the most important base models through a correlation study or a machine learning based approach. We will also aim at interpreting the objective function from other perspectives such as whether it correlates to the PageRank method as mentioned in \cite{NIPS2009_3855}. Another crucial point is how to adopt the model when a few labeled objects are available.  We will publish the code of our proposed methods  upon acceptance of this paper.

\section*{Acknowledgment}
This work was supported in part by the Ramanujan Faculty Fellowship grant. The conclusions and interpretations present in this paper are those of the authors and do not have any relation with the funding agencies. The author would like to thank Prof. V.S. Subrahmanian (Dartmouth College, USA)  for the discussion and effective feedback.

\bibliographystyle{IEEEtran}

\begin{thebibliography}{10}
\providecommand{\url}[1]{#1}
\csname url@samestyle\endcsname
\providecommand{\newblock}{\relax}
\providecommand{\bibinfo}[2]{#2}
\providecommand{\BIBentrySTDinterwordspacing}{\spaceskip=0pt\relax}
\providecommand{\BIBentryALTinterwordstretchfactor}{4}
\providecommand{\BIBentryALTinterwordspacing}{\spaceskip=\fontdimen2\font plus
\BIBentryALTinterwordstretchfactor\fontdimen3\font minus
  \fontdimen4\font\relax}
\providecommand{\BIBforeignlanguage}[2]{{%
\expandafter\ifx\csname l@#1\endcsname\relax
\typeout{** WARNING: IEEEtran.bst: No hyphenation pattern has been}%
\typeout{** loaded for the language `#1'. Using the pattern for}%
\typeout{** the default language instead.}%
\else
\language=\csname l@#1\endcsname
\fi
#2}}
\providecommand{\BIBdecl}{\relax}
\BIBdecl

\bibitem{Dimitriadou2001}
\BIBentryALTinterwordspacing
E.~Dimitriadou, A.~Weingessel, and K.~Hornik, \emph{Voting-Merging: An Ensemble
  Method for Clustering}.\hskip 1em plus 0.5em minus 0.4em\relax Berlin,
  Heidelberg: Springer Berlin Heidelberg, 2001. [Online]. Available:
  \url{http://dx.doi.org/10.1007/3-540-44668-0_31}
\BIBentrySTDinterwordspacing

\bibitem{Strehl:2003}
A.~Strehl and J.~Ghosh, ``Cluster ensembles --- a knowledge reuse framework for
  combining multiple partitions,'' \emph{J. Mach. Learn. Res.}, vol.~3, pp.
  583--617, Mar. 2003.

\bibitem{FernB04}
\BIBentryALTinterwordspacing
X.~Z. Fern and C.~E. Brodley, ``Solving cluster ensemble problems by bipartite
  graph partitioning.'' in \emph{ICML}, C.~E. Brodley, Ed., vol.~69.\hskip 1em
  plus 0.5em minus 0.4em\relax ACM, 2004. [Online]. Available:
  \url{http://dblp.uni-trier.de/db/conf/icml/icml2004.html#FernB04}
\BIBentrySTDinterwordspacing

\bibitem{bagging}
L.~Breiman, ``Bagging predictors,'' \emph{Machine Learning}, vol.~24, no.~2,
  pp. 123--140, 1996.

\bibitem{Schapire:1999}
R.~E. Schapire, ``A brief introduction to boosting,'' in \emph{IJCAI},
  Stockholm, Sweden, 1999, pp. 1401--1406.

\bibitem{0002CS17}
T.~Chakraborty, D.~Chandhok, and V.~S. Subrahmanian, ``{MC3:} {A} multi-class
  consensus classification framework,'' in \emph{PAKDD}, Jeju, South Korea,
  2017, pp. 343--355.

\bibitem{Gomes:2017}
J.~Friedman and B.~Popescu, ``Predictive learning via rule ensembles,''
  \emph{Annals of Applied Statistics}, vol.~3, no.~2, pp. 916--954, 2008.

\bibitem{ZhangYZHL14}
X.~Zhang, P.~Yang, Y.~Zhang, K.~Huang, and C.~Liu, ``Combination of
  classification and clustering results with label propagation,'' \emph{{IEEE}
  Signal Process. Lett.}, vol.~21, no.~5, pp. 610--614, 2014.

\bibitem{Acharya2011}
A.~Acharya, E.~R. Hruschka, J.~Ghosh, and S.~Acharyya, ``$c^3e$: A framework
  for combining ensembles of classifiers and clusterers,'' in \emph{MCS}.\hskip
  1em plus 0.5em minus 0.4em\relax Berlin, Heidelberg: Springer-Verlag, 2011,
  pp. 269--278.

\bibitem{NIPS2009_3855}
J.~Gao, F.~Liang, W.~Fan, Y.~Sun, and J.~Han, ``A graph-based consensus
  maximization approach for combining multiple supervised and unsupervised
  models,'' \emph{IEEE TKDE}, vol.~25, no.~1, pp. 15--28, 2013.

\bibitem{Ao:2014}
X.~Ao, P.~Luo, X.~Ma, F.~Zhuang, Q.~He, Z.~Shi, and Z.~Shen, ``Combining
  supervised and unsupervised models via unconstrained probabilistic
  embedding,'' \emph{Inf. Sci.}, vol. 257, pp. 101--114, 2014.

\bibitem{Reid2009}
S.~Reid, \emph{Regularized Linear Models in Stacked Generalization}, Reykjavik,
  Iceland, 2009, pp. 112--121.

\bibitem{Kotsiantis:2007}
\BIBentryALTinterwordspacing
S.~B. Kotsiantis, ``Supervised machine learning: A review of classification
  techniques,'' in \emph{Proceedings of the 2007 Conference on Emerging
  Artificial Intelligence Applications in Computer Engineering: Real Word AI
  Systems with Applications in eHealth, HCI, Information Retrieval and
  Pervasive Technologies}.\hskip 1em plus 0.5em minus 0.4em\relax Amsterdam,
  The Netherlands, The Netherlands: IOS Press, 2007, pp. 3--24. [Online].
  Available: \url{http://dl.acm.org/citation.cfm?id=1566770.1566773}
\BIBentrySTDinterwordspacing

\bibitem{1427769}
R.~Xu and D.~Wunsch, ``Survey of clustering algorithms,'' \emph{IEEE
  Transactions on Neural Networks}, vol.~16, no.~3, pp. 645--678, 2005.

\bibitem{mepdh2014bis}
D.~Pechyony, ``Theory and practice of transductive learning,'' Ph.D.
  dissertation, Israel Institute of Technology, 20o8.

\bibitem{4724730}
N.~N. Pise and P.~Kulkarni, ``A survey of semi-supervised learning methods,''
  in \emph{2008 International Conference on Computational Intelligence and
  Security}, vol.~2, Dec 2008, pp. 30--34.

\bibitem{Goldberg:2006}
\BIBentryALTinterwordspacing
A.~B. Goldberg and X.~Zhu, ``Seeing stars when there aren't many stars:
  Graph-based semi-supervised learning for sentiment categorization,'' in
  \emph{Proceedings of the First Workshop on Graph Based Methods for Natural
  Language Processing}, ser. TextGraphs-1.\hskip 1em plus 0.5em minus
  0.4em\relax Stroudsburg, PA, USA: Association for Computational Linguistics,
  2006, pp. 45--52. [Online]. Available:
  \url{http://dl.acm.org/citation.cfm?id=1654758.1654769}
\BIBentrySTDinterwordspacing

\bibitem{Chen:2016:XST}
T.~Chen and C.~Guestrin, ``Xgboost: A scalable tree boosting system,'' in
  \emph{Proceedings of the 22Nd ACM SIGKDD International Conference on
  Knowledge Discovery and Data Mining}, ser. KDD, San Francisco, California,
  USA, 2016, pp. 785--794.

\bibitem{CIS-230716}
J.~H. Friedman and B.~E. Popescu, ``Predictive learning via rule ensembles,''
  \emph{The Annals of Applied Statistics}, vol.~2, no.~3, pp. 916--954, 2008.

\bibitem{Jacobs:1991}
\BIBentryALTinterwordspacing
R.~A. Jacobs, M.~I. Jordan, S.~J. Nowlan, and G.~E. Hinton, ``Adaptive mixtures
  of local experts,'' \emph{Neural Comput.}, vol.~3, no.~1, pp. 79--87, Mar.
  1991. [Online]. Available: \url{http://dx.doi.org/10.1162/neco.1991.3.1.79}
\BIBentrySTDinterwordspacing

\bibitem{citeulike:474258}
\BIBentryALTinterwordspacing
J.~A. Hoeting, D.~Madigan, A.~E. Raftery, and C.~T. Volinsky, ``Bayesian model
  averaging: A tutorial,'' \emph{Statistical Science}, vol.~14, no.~4, pp.
  382--417, 1999. [Online]. Available:
  \url{http://www.stat.washington.edu/www/research/online/hoeting1999.pdf}
\BIBentrySTDinterwordspacing

\bibitem{Breiman:2001}
L.~Breiman, ``Random forests,'' \emph{Mach. Learn.}, vol.~45, no.~1, pp. 5--32,
  Oct. 2001.

\bibitem{1565674}
W.~Fan, E.~Greengrass, J.~McCloskey, P.~S. Yu, and K.~Drammey, ``Effective
  estimation of posterior probabilities: explaining the accuracy of randomized
  decision tree approaches,'' in \emph{Fifth IEEE International Conference on
  Data Mining (ICDM'05)}, Nov 2005, pp. 8--17.

\bibitem{Kuncheva:2004}
L.~I. Kuncheva, \emph{Combining Pattern Classifiers: Methods and
  Algorithms}.\hskip 1em plus 0.5em minus 0.4em\relax Wiley-Interscience, 2004.

\bibitem{Bauer:1999}
\BIBentryALTinterwordspacing
E.~Bauer and R.~Kohavi, ``An empirical comparison of voting classification
  algorithms: Bagging, boosting, and variants,'' \emph{Mach. Learn.}, vol.~36,
  no. 1-2, pp. 105--139, Jul. 1999. [Online]. Available:
  \url{http://dx.doi.org/10.1023/A:1007515423169}
\BIBentrySTDinterwordspacing

\bibitem{Zhu05}
\BIBentryALTinterwordspacing
X.~Zhu, ``Semi-supervised learning literature survey,'' Computer Sciences,
  University of Wisconsin-Madison, Tech. Rep. 1530, 2005. [Online]. Available:
  \url{http://pages.cs.wisc.edu/~jerryzhu/pub/ssl_survey.pdf}
\BIBentrySTDinterwordspacing

\bibitem{Mallapragada2009}
P.~K. Mallapragada, R.~Jin, A.~K. Jain, and Y.~Liu, ``{SemiBoost: Boosting for
  Semi-Supervised Learning},'' \emph{IEEE Transactions on Pattern Analysis and
  Machine Intelligence}, vol.~31, pp. 2000--2014, 2009.

\bibitem{Bennett:2002}
K.~P. Bennett, A.~Demiriz, and R.~Maclin, ``Exploiting unlabeled data in
  ensemble methods,'' in \emph{ACM SIGKDD}, Edmonton, Alberta, Canada, 2002,
  pp. 289--296.

\bibitem{Zhou:2005}
Z.-H. Zhou and M.~Li, ``Tri-training: Exploiting unlabeled data using three
  classifiers,'' \emph{IEEE Trans. on Knowl. and Data Eng.}, vol.~17, no.~11,
  pp. 1529--1541, 2005.

\bibitem{XIAO201673}
``Ensemble classification based on supervised clustering for credit scoring,''
  \emph{Applied Soft Computing}, vol.~43, pp. 73 -- 86, 2016.

\bibitem{Singh:2003}
V.~Singh, L.~Mukherjee, J.~Peng, and J.~Xu, ``Ensemble clustering using
  semidefinite programming,'' \emph{Ensemble Clustering using Semidefinite
  Programming}, vol.~20, pp. 3283--3290, 2007.

\bibitem{Gionis:2007}
\BIBentryALTinterwordspacing
A.~Gionis, H.~Mannila, and P.~Tsaparas, ``Clustering aggregation,'' \emph{ACM
  Trans. Knowl. Discov. Data}, vol.~1, no.~1, Mar. 2007. [Online]. Available:
  \url{http://doi.acm.org/10.1145/1217299.1217303}
\BIBentrySTDinterwordspacing

\bibitem{1565690}
B.~Long, Z.~Zhang, and P.~S. Yu, ``Combining multiple clusterings by soft
  correspondence,'' in \emph{Fifth IEEE International Conference on Data Mining
  (ICDM'05)}, Nov 2005, pp. 8--16.

\bibitem{LiDJ07}
\BIBentryALTinterwordspacing
T.~Li, C.~H.~Q. Ding, and M.~I. Jordan, ``Solving consensus and semi-supervised
  clustering problems using nonnegative matrix factorization,'' in
  \emph{Proceedings of the 7th {IEEE} International Conference on Data Mining
  {(ICDM} 2007), October 28-31, 2007, Omaha, Nebraska, {USA}}, 2007, pp.
  577--582. [Online]. Available: \url{https://doi.org/10.1109/ICDM.2007.98}
\BIBentrySTDinterwordspacing

\bibitem{NIPS2009}
J.~Gao, F.~Liang, W.~Fan, Y.~Sun, and J.~Han, ``Graph-based consensus
  maximization among multiple supervised and unsupervised models,'' in
  \emph{NIPS}, 2009, pp. 585--593.

\bibitem{5694009}
F.~Wang, P.~Li, and A.~C. Konig, ``Learning a bi-stochastic data similarity
  matrix,'' in \emph{ICDM}, 2010, pp. 551--560.

\bibitem{Inderjit}
I.~S. Dhillon and J.~A. Tropp, ``Matrix nearness problems with bregman
  divergences,'' \emph{SIAM Journal on Matrix Analysis and Applications},
  vol.~29, no.~4, pp. 1120--1146, 2008.

\bibitem{XuY13}
\BIBentryALTinterwordspacing
Y.~Xu and W.~Yin, ``A block coordinate descent method for regularized
  multiconvex optimization with applications to nonnegative tensor
  factorization and completion.'' \emph{SIAM J. Imaging Sciences}, vol.~6,
  no.~3, pp. 1758--1789, 2013. [Online]. Available:
  \url{http://dblp.uni-trier.de/db/journals/siamis/siamis6.html#XuY13}
\BIBentrySTDinterwordspacing

\bibitem{Domeniconi:2009}
C.~Domeniconi and M.~Al-Razgan, ``Weighted cluster ensembles: Methods and
  analysis,'' \emph{ACM Trans. Knowl. Discov. Data}, vol.~2, no.~4, pp.
  17:1--17:40, Jan. 2009.

\bibitem{titanic}
``Titanic dataset,'' \url{https://www.kaggle.com/c/titanic}, accessed:
  2016-09-30.

\bibitem{Lichman:2013}
M.~Lichman, ``{UCI} repository,'' \url{http://archive.ics.uci.edu/ml}, 2013.

\bibitem{Yeh:2009:}
I.-C. Yeh and C.-h. Lien, ``The comparisons of data mining techniques for the
  predictive accuracy of probability of default of credit card clients,''
  \emph{Expert Syst. Appl.}, vol.~36, no.~2, pp. 2473--2480, 2009.

\bibitem{susy}
P.~Baldi, P.~Sadowski, and D.~Whiteson, ``{Searching for Exotic Particles in
  High-Energy Physics with Deep Learning},'' \emph{Nature Commun.}, vol.~5, p.
  4308, 2014.

\bibitem{Tseng2001}
P.~Tseng, ``Convergence of a block coordinate descent method for
  nondifferentiable minimization,'' \emph{Journal of Optimization Theory and
  Applications}, vol. 109, no.~3, pp. 475--494, 2001.

\bibitem{Quinlan:1986}
\BIBentryALTinterwordspacing
J.~R. Quinlan, ``Induction of decision trees,'' \emph{Mach. Learn.}, vol.~1,
  no.~1, pp. 81--106, Mar. 1986. [Online]. Available:
  \url{http://dx.doi.org/10.1023/A:1022643204877}
\BIBentrySTDinterwordspacing

\bibitem{Webb2010}
\BIBentryALTinterwordspacing
G.~I. Webb, \emph{Na{\"i}ve Bayes}.\hskip 1em plus 0.5em minus 0.4em\relax
  Boston, MA: Springer US, 2010, pp. 713--714. [Online]. Available:
  \url{https://doi.org/10.1007/978-0-387-30164-8_576}
\BIBentrySTDinterwordspacing

\bibitem{citeulike:5847607}
\BIBentryALTinterwordspacing
N.~S. Altman, ``{An Introduction to Kernel and Nearest-Neighbor Nonparametric
  Regression},'' \emph{The American Statistician}, vol.~46, no.~3, pp.
  175--185, 1992. [Online]. Available: \url{http://dx.doi.org/10.2307/2685209}
\BIBentrySTDinterwordspacing

\bibitem{Krishnapuram}
B.~Krishnapuram, L.~Carin, M.~A.~T. Figueiredo, and A.~J. Hartemink, ``Sparse
  multinomial logistic regression: Fast algorithms and generalization bounds,''
  \emph{IEEE TRANSACTIONS ON PATTERN ANALYSIS AND MACHINE INTELLIGENCE},
  vol.~27, p. 2005, 2005.

\bibitem{Steinwart:2008}
I.~Steinwart and A.~Christmann, \emph{Support Vector Machines}, 1st~ed.\hskip
  1em plus 0.5em minus 0.4em\relax Springer Publishing Company, Incorporated,
  2008.

\bibitem{Bottou2010}
L.~Bottou, \emph{Large-Scale Machine Learning with Stochastic Gradient
  Descent}.\hskip 1em plus 0.5em minus 0.4em\relax Physica-Verlag HD, 2010, pp.
  177--186.

\bibitem{Krizhevsky:2012}
\BIBentryALTinterwordspacing
A.~Krizhevsky, I.~Sutskever, and G.~E. Hinton, ``Imagenet classification with
  deep convolutional neural networks,'' in \emph{Proceedings of the 25th
  International Conference on Neural Information Processing Systems}, ser.
  NIPS'12.\hskip 1em plus 0.5em minus 0.4em\relax USA: Curran Associates Inc.,
  2012, pp. 1097--1105. [Online]. Available:
  \url{http://dl.acm.org/citation.cfm?id=2999134.2999257}
\BIBentrySTDinterwordspacing

\bibitem{ester1996}
M.~Ester, H.-P. Kriegel, J.~Sander, and X.~Xu, ``A density-based algorithm for
  discovering clusters in large spatial databases with noise,'' in
  \emph{SIGKDD}, Portland,OR,USA, 1996, pp. 226--231.

\bibitem{Rokach2005}
\BIBentryALTinterwordspacing
L.~Rokach and O.~Maimon, \emph{Clustering Methods}, O.~Maimon and L.~Rokach,
  Eds.\hskip 1em plus 0.5em minus 0.4em\relax Boston, MA: Springer US, 2005.
  [Online]. Available: \url{http://dx.doi.org/10.1007/0-387-25465-X_15}
\BIBentrySTDinterwordspacing

\bibitem{Frey07clusteringby}
B.~J. Frey and D.~Dueck, ``Clustering by passing messages between data
  points,'' \emph{Science}, vol. 315, no. 5814, pp. 972--976, 2007.

\bibitem{kmeans}
J.~B. MacQueen, ``Some methods for classification and analysis of multivariate
  observations,'' in \emph{Proc. of the Fifth Berkeley Symposium on
  Mathematical Statistics and Probability}, vol.~1, no.~14, Oakland, CA, USA,
  1967, pp. 281--297.

\bibitem{Fukunaga:2006}
K.~Fukunaga and L.~Hostetler, ``The estimation of the gradient of a density
  function, with applications in pattern recognition,'' \emph{IEEE Trans. Inf.
  Theor.}, vol.~21, no.~1, pp. 32--40, 1975.

\bibitem{Rousseeuw:1987}
P.~Rousseeuw, ``Silhouettes: A graphical aid to the interpretation and
  validation of cluster analysis,'' \emph{J. Comput. Appl. Math.}, vol.~20, pp.
  53--65, 1987.

\bibitem{Paninski:2003}
\BIBentryALTinterwordspacing
L.~Paninski, ``Estimation of entropy and mutual information,'' \emph{Neural
  Comput.}, vol.~15, no.~6, pp. 1191--1253, Jun. 2003. [Online]. Available:
  \url{http://dx.doi.org/10.1162/089976603321780272}
\BIBentrySTDinterwordspacing

\end{thebibliography}

%

\vspace{-12mm}

\begin{IEEEbiography}[{\includegraphics[width=1in,height=1.25in,clip,keepaspectratio]{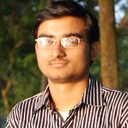}}]{Tanmoy Chakraborty}
is an Assistant Professor and a Ramanujan Fellow in the Dept of Computer Science \& Engineering, IIIT-Delhi, India. Prior to this, he was a postdoctoral researcher at  University of Maryland, College Park, USA.  He finished his Ph.D. as a Google India Ph.D fellow from IIT Kharagpur, India in 2015. His Ph.D thesis was recognized as best thesis by IBM Research India, Xerox research India and Indian National Academy of Engineering (INAE).  
His broad research interests include Data Mining, Social Media and Data-driven Cybersecurity. 
\end{IEEEbiography}

\if{0}
\begin{IEEEbiographynophoto}{Vijay Balakrishnan}
is a student in the Dept. of Computer Science, University of Maryland, College Park, USA. His research areas are data mining and social network analysis.
\end{IEEEbiographynophoto}
\fi




\end{document}